\documentclass[10pt]{article}
\usepackage{amsmath,graphicx}
\usepackage{amsthm}
\newtheorem{theorem}{Theorem}
\usepackage{algorithm,algpseudocode}
\newtheorem{prop}{Proposition}
\usepackage{url}
\usepackage{booktabs}

\usepackage{epstopdf}
\usepackage{amssymb}
\usepackage{tikz}
\hyphenation{op-tical net-works semi-conduc-tor}

\usepackage{pbox}

\def\vect{\mathbf}
\def\matr{\mathbf}
\def\mrm{\mathrm}

\newcommand{\tr}{\mathop{\mathrm{tr}}\nolimits}

\DeclareMathOperator*{\argmin}{arg\,min}
\newcommand{\inprod}[2]{\langle #1 , #2 \rangle}

\usepackage{cite}
\usepackage{hyperref}

\newcommand{\cape}{\mathsf{CAPE}}
\newcommand{\capefm}{\mathsf{capeFM}}
\newcommand{\dpfm}{\mathsf{dp-fm}}
\newcommand{\objpert}{\mathsf{objPert}}
\newcommand{\conv}{\mathsf{conv}}
\newcommand{\local}{\mathsf{local}}

\newcommand{\secureagg}{\mathsf{SecureAgg}}

\newcommand{\nonpriv}{\mathsf{Non-priv.}}
\newcommand{\nonprivT}{\mathsf{Non-priv.}}

\usepackage{mathtools}
\DeclarePairedDelimiter{\ceil}{\lceil}{\rceil}

\usepackage[normalem]{ulem}
\newcommand\redout{\bgroup\markoverwith{\textcolor{red}{\rule[.5ex]{2pt}{0.4pt}}}\ULon}

\newtheorem{lemma}{Lemma}
\newtheorem{Def}{Definition}

\newtheorem{Rem}{Remark}
%

\hyphenation{op-tical net-works semi-conduc-tor}

\begin{document}

\title{Distributed Differentially Private Computation of Functions with Correlated Noise}
\date{}
\author{Hafiz Imtiaz, %
Jafar Mohammadi, and %
Anand D. Sarwate%
\thanks{H.~Imtiaz and A.D.~Sarwate are with the Department of Electrical and Computer Engineering, Rutgers, The State University of New Jersey, 94 Brett Road, Piscataway, NJ 08854, USA. Email: \texttt{hafiz.imtiaz@rutgers.edu}, \texttt{anand.sarwate@rutgers.edu}. J.~Mohammadi is with Nokia Bell Labs, Lorenzstrasse 10, 70435 Stuttgart, Germany. Email: \texttt{jmohammadi@jacobs-alumni.de}.}
\thanks{This work was supported in part by the United States National Institutes of Health under award 1R01DA040487, the United States National Science Foundation under award CCF-1453432, and DARPA and SSC Pacific under contract N66001-15-C-4070.}%
\thanks{Manuscript received January xx, 2019; revised xxxx xx, 2019.}
}

%

\maketitle

\begin{abstract}
Many applications of machine learning, such as human health research, involve processing private or sensitive information. Privacy concerns may impose significant hurdles to collaboration in scenarios where there are multiple sites holding data and the goal is to estimate properties jointly across all datasets. Differentially private decentralized algorithms can provide strong privacy guarantees. However, the accuracy of the joint estimates may be poor when the datasets at each site are small. This paper proposes a new framework, Correlation Assisted Private Estimation ($\cape$), for designing privacy-preserving decentralized algorithms with better accuracy guarantees in an honest-but-curious model. $\cape$ can be used in conjunction with the functional mechanism for statistical and machine learning optimization problems. A tighter characterization of the functional mechanism is provided that allows $\cape$ to achieve the same performance as a centralized algorithm in the decentralized setting using all datasets. Empirical results on regression and neural network problems for both synthetic and real datasets show that differentially private methods can be competitive with non-private algorithms in many scenarios of interest.
\end{abstract}


\section{Introduction}\label{sec:intro}
Privacy-sensitive learning is important in many applications: examples include human health research, business informatics, and location-based services among others. Releasing any function of private data, even summary statistics and other aggregates, can reveal information about the underlying training data. Differential privacy (DP)~\cite{dwork2006} is a cryptographically motivated and mathematically rigorous framework for measuring the risk associated with performing computations on private data. More specifically, it measures the privacy risk in terms of the probability of identifying the presence of individual data points in a dataset from the results of computations performed on that data. As such, it has emerged as a de-facto standard for privacy-preserving technologies in research and practice~\cite{rappor,appleDP,UScensus}.

Differential privacy is also useful when the private data is distributed over different locations (sites). For example, a consortium for medical research on a particular disease may consist of several healthcare centers/research labs, each with their own dataset of human subjects~\cite{enigma,coinstac}. Data holders may be reluctant or unable to directly share ``raw'' data to an aggregator due to ethical (privacy) and technical (bandwidth) reasons. From a statistical standpoint, the number of samples held locally is usually not large enough for meaningful feature learning. Consider training a deep neural network to detect Alzheimer's disease based on neuroimaging data from several studies~\cite{coinstac}: training locally at one site is infeasible as the number of subjects in each study is small. Decentralized algorithms can allow data owners to maintain local control of the data while passing messages to assist in a joint computation across many datasets. If these computations are differentially private, they can measure and control privacy risks. 

Differentially private algorithms introduce noise to guarantee privacy: conventional distributed DP algorithms often have poor utility due to excess noise compared to centralized analyses. In this paper we propose a Correlation Assisted Private Estimation ($\cape$) framework, which is a novel  \emph{distributed and privacy-preserving} protocol that provides utility close to centralized case. We achieve this by inducing (anti) correlated noise in the differentially private messages. The $\cape$ protocol can be applied to computing loss functions that are separable across sites. This class includes optimization algorithms, such as empirical risk minimization (ERM) problems, common in machine learning (ML) applications.

\noindent\textbf{Related Works.} 
There is a vast literature~\cite{boyd2011, molzahn2017, uribe2017, nedic2009, athans1986, huang2015, han2017, nozari2016, zhu2018} on solving optimization problems in distributed settings, both with and without privacy concerns. In the machine learning context, the most relevant ones to our current work are those using ERM and stochastic gradient descent (SGD)~\cite{Kamalika09, anand2011, anandSGD, abadi2016, Lipton14, DPonline18, bassily2014, Katrina17, Wang17}. Additionally, several works studied distributed differentially private learning for locally trained classifiers~\cite{Pathak10,Agarwal12,BassilyKobbi17}. One of the most common approaches for ensuring differential privacy in optimization problems is to employ randomized gradient computations~\cite{anandSGD, bassily2014}. Another common approach is to employ the output perturbation~\cite{anand2011}, which adds noise to the output of the optimization problem according to the sensitivity of the optimization variable. Note that, both of these approaches involve computing the sensitivity (of the gradient or the output variable) and then adding noise scaled to the sensitivity~\cite{dwork2006}. The problem with output perturbation is that the relation between the data and the parameter set is often hard to characterize. This is due to the complex nature of the optimization and as a result, the sensitivity is very difficult to compute. However, differentially-private gradient descent methods can circumvent this by bounding the gradients at the expense of slowing down the convergence process. Finally, one can employ the objective perturbation~\cite{anand2011, nozari2016}, where we need to perturb the objective function and find the minimizer of the perturbed objective function. However, the objective function has to satisfy some strict conditions, which are not met in many practical optimization problems~\cite{zhang2012}. In addition to optimization problems, Smith~\cite{smith2011} proposed a general approach for computing summary statistics using the \emph{sample-and-aggregate} framework and both the Laplace and Exponential mechanisms~\cite{mcsherry2007}. Jing~\cite{jing2011} proposed a unique approach that uses perturbed histograms for releasing a class of $M$-estimators in a non-interactive way. 

Differentially private algorithms provide different guarantees than Secure Multi-party Computation (SMC) based methods (see ~\cite{SlawomirMPC17, ShoukryPappas16, Eigner14, Kairouz2015smc, Suresh16, Bonawitz16} for thorough comparisons between SMC and differential privacy based methods). Gade and Vaidya~\cite{GadeSMC16} applied a combination of SMC and DP for distributed optimization in which each site adds and subtracts arbitrary functions to confuse the adversary. Bonawitz et al.~\cite{Bonawitz17} proposed a communication-efficient method for \emph{federated learning} over a large number of mobile devices. The most recent work in this line is that of Heikkil\"{a} et al.~\cite{Mikko17}, who also studied the relationship of additive noise and sample size in a distributed setting. In their model, $S$ data holders communicate their data to $M$ computation nodes to compute a function. Our work is inspired by the seminal work of Dwork et al.~\cite{Ourdata06} that proposed distributed noise generation for preserving privacy. We employ a similar principle as Anandan and Clifton~\cite{Clifton15} to \emph{reduce} the noise added for differential privacy. 

Our application to distributed DP function computation in this paper builds on the \emph{functional mechanism}~\cite{zhang2012}, which uses functional approximation~\cite{rudin1976} and the Laplace mechanism~\cite{dwork2006} to create DP approximations for any continuous and differentiable function. Zhang et al.'s approach~\cite{zhang2012} does not scale well to decentralized problems. We provide a better analysis of the sensitivity of their approximation and adapt the approach to the decentralized setting.

\begin{figure}[t]
  \centering
  \includegraphics[width=0.8\columnwidth]{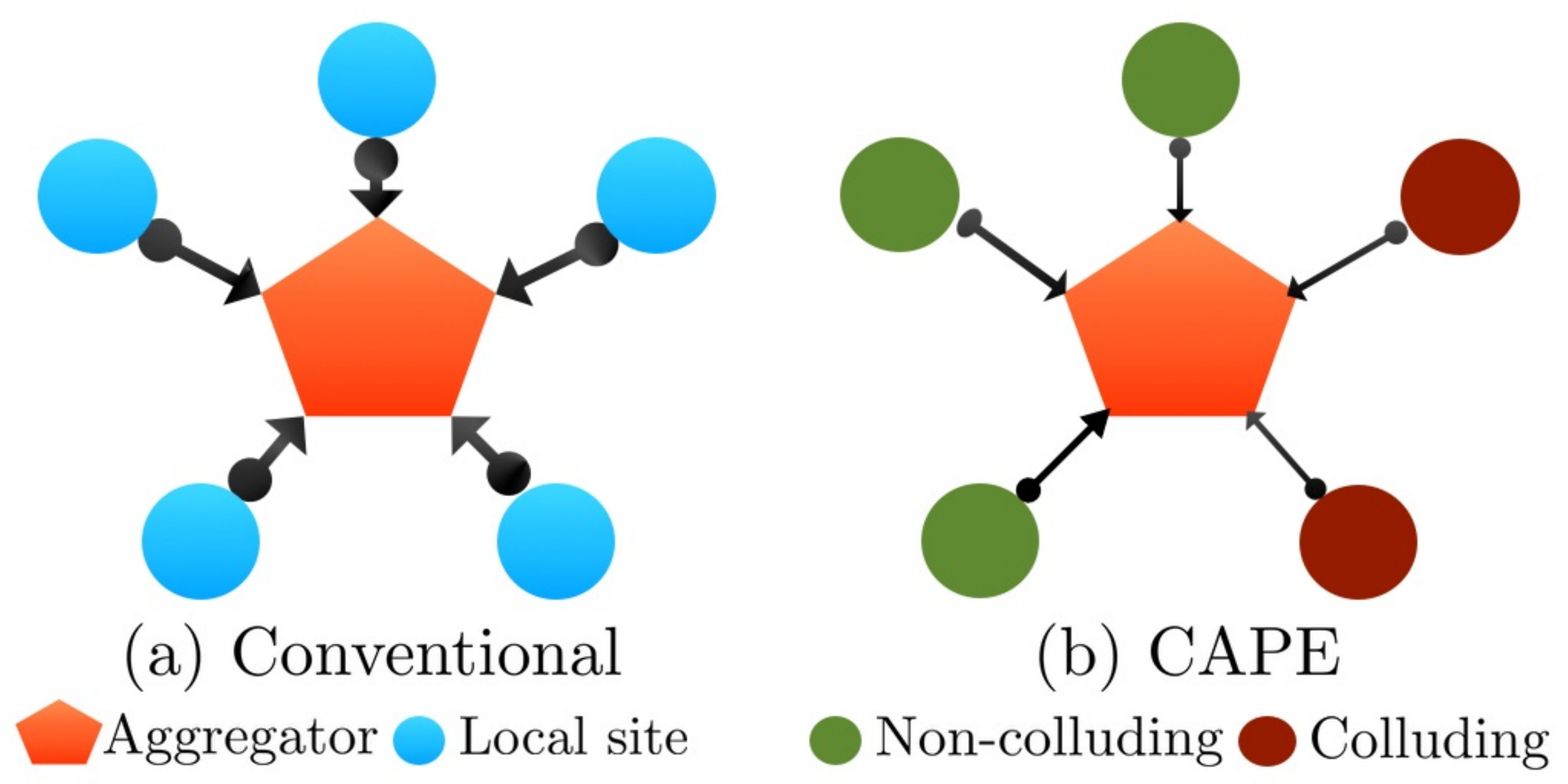}\\
  \vspace{-0.0in}
  \caption{The structure of the network: (a) conventional, (b) $\cape$}
  \label{fig:network_structure}
\end{figure}

\noindent\textbf{Our Contribution.} 
The goal of our work is to reduce the amount of noise in conventional distributed differentially private schemes for applications of machine learning to settings similar to those found in research consortia. 
We summarize our contributions here:
\begin{itemize}
\item We propose a novel distributed computation protocol, $\cape$, that improves upon the conventional distributed DP schemes and achieves the same level of utility as the pooled data scenario in certain regimes. $\cape$ can be employed in a wide range of computations that frequently appear in machine learning problems.
\item We propose an improved functional mechanism (FM) using a tighter sensitivity analysis. We show analytically that it guarantees less noisy function computation for linear and logistic regression problems at the expense of an approximate DP guarantee. Empirical validation on real and synthetic data validates our approach.
\item We extend the FM to decentralized settings and show that $\cape$ can achieve the same utility as the pooled data scenario in some regimes. To the best of our knowledge, this work proposes the first distributed functional mechanism. 
\item We demonstrate the effectiveness of our algorithms with varying privacy and dataset parameters. Our privacy analysis and empirical results on real and synthetic datasets show that the proposed algorithms can achieve much better utility than the existing state of the art algorithms.
\end{itemize}
Note that, we showed a preliminary version of the $\cape$ protocol in~\cite{imtiaz2018}. The protocol in this paper is more robust against site dropouts and does not require a trusted third-party.

\section{Data and Privacy Model}\label{sec:problem_formulation}
\noindent\textbf{Notation.} We denote vectors with bold lower case letters (e.g., $\vect{x}$), matrices with bold upper case letters (e.g. $\matr{X}$), scalars with regular letters (e.g., $M$) and indices with lower case letters (e.g., $m$). Indices typically run from 1 to their upper-case versions (e.g., $m \in \{1, 2, \ldots, M\} \triangleq [M]$). We denote the $n$-th column of the matrix $\matr{X}$ as $\vect{x}_n$. We use $\|\cdot\|_2$,  $\|\cdot\|_F$ and $\tr(\cdot)$ for the Euclidean (or $\mathcal{L}_2$) norm of a vector or spectral norm of a matrix, the Frobenius norm, and the trace operation, respectively. Finally, we denote the inner-product between two arrays as $\inprod{\cdot}{\cdot}$. For example, if $\matr{A}$ and $\matr{B}$ are two matrices then $\inprod{\matr{A}}{\matr{B}} = \tr\left(\matr{A}^\top\matr{B}\right)$.

\noindent\textbf{Distributed Data Setting.} We consider a distributed data setting with $S$ sites and a central aggregator node (see Figure \ref{fig:network_structure}). Each site $s \in [S]$ holds $N_s$ samples and the total number of samples across all sites is given by $N = \sum_{s=1}^S N_s$. We assume that all parties are ``honest but curious''. That is, the sites and the aggregator will follow the protocol but a subset may collude to learn another site's data/function output. Additionally, we assume that the data samples in the local sites are disjoint. We use the terms ``distributed'' and ``decentralized'' interchangeably in this paper. 

\begin{Def}[($\epsilon, \delta$)-Differential Privacy~\cite{dwork2006}] 
An algorithm $\mathcal{A}(\mathbb{D})$ taking values in a set $\mathbb{T}$ provides $(\epsilon,\delta)$-differential privacy if $\Pr[\mathcal{A}(\mathbb{D}) \in \mathbb{S}] \leq \exp(\epsilon) \Pr[\mathcal{A}(\mathbb{D'}) \in \mathbb{S}] + \delta$, for all measurable $\mathbb{S} \subseteq \mathbb{T}$ and all data sets $\mathbb{D}$ and $\mathbb{D'}$ differing in a single entry (neighboring datasets).
\end{Def}
\noindent This definition essentially states that the probability of the output of an algorithm is not changed significantly if the corresponding database input is changed by just one entry. Here, $\epsilon$ and $\delta$ are privacy parameters, where low $\epsilon$ and $\delta$ ensure more privacy. The parameter $\delta$ can be interpreted as the probability that the algorithm fails to provide privacy risk $\epsilon$. Several mechanisms can be employed to ensure that an algorithm satisfies differential privacy. Additive noise mechanisms such as the Gaussian or Laplace mechanisms~\cite{dwork2006, dwork2013algorithmic} and random sampling using the exponential mechanism~\cite{mcsherry2007} are among the most common ones. For additive noise mechanisms, the standard deviation of the noise is scaled to the \emph{sensitivity} of the computation. 

\begin{Def}[$\mathcal{L}_p$-sensitivity~\cite{dwork2006}]
The $\mathcal{L}_p$-sensitivity of a vector-valued function $f(\mathbb{D})$ is $\Delta := \max_{ \mathbb{D}, \mathbb{D'} } \|f(\mathbb{D})-f(\mathbb{D'})\|_p$,  where $\mathbb{D}$ and $\mathbb{D'}$ are neighboring datasets. 
\end{Def}
\noindent We will focus on $p=1$ and $2$ in this paper.

\begin{Def}[Gaussian Mechanism~\cite{dwork2013algorithmic}]
Let $f: \mathbb{D} \mapsto \mathbb{R}^D$ be an arbitrary $D$-dimensional function with $\mathcal{L}_2$-sensitivity $\Delta$. The Gaussian Mechanism with parameter $\tau$ adds noise scaled to $\mathcal{N}(0, \tau^2)$ to each of the $D$ components of the output and satisfies $(\epsilon, \delta)$ differential privacy if
\begin{align}
\tau &\geq \frac{\Delta}{\epsilon}\sqrt{2\log\frac{1.25}{\delta}}.
\label{eq:GaussMech}
\end{align}
\end{Def}
\noindent Note that, for any given $(\epsilon, \delta)$ pair, we can calculate a noise variance $\tau^2$ such that addition of a noise term drawn from $\mathcal{N}(0, \tau^2)$ guarantees $(\epsilon, \delta)$-differential privacy. Since there are infinitely many $(\epsilon, \delta)$ pairs that yield the same $\tau^2$, we parameterize our methods using $\tau^2$~\cite{imtiaz2018} in this paper.

\section{Correlation Assisted Private Estimation}\label{sec:cape}

\subsection{Conventional Approach to Distributed DP Computations} \label{sec:dist_avg}
We now describe the problem with conventional distributed DP and the $\cape$ approach to improve performance~\cite{imtiaz2018}.
Suppose we want to compute the average of $N$ data samples. Each sample $x_n$ is a scalar with $x_n \in \left[0,1\right]$. We denote the vector of $N$ data samples as $\vect{x} = \left[x_1,\ldots, x_{N-1},\ x_N\right]^\top$. We are interested in computing the $(\epsilon, \delta)$-DP estimate of the mean function: $f(\vect{x}) = \frac{1}{N} \sum_{n = 1}^N x_n$. To compute the sensitivity~\cite{dwork2006} of the scalar-valued function $f(\vect{x})$, we consider a neighboring data vector $\vect{x'} = \left[x_1,\ldots, x_{N-1},\ x'_N\right]^\top$. We observe $\left|f(\vect{x}) - f(\vect{x'})\right| = \frac{1}{N}\left|x_N - x'_N\right| \leq \frac{1}{N}$, which follows from the assumption $x_n \in \left[0,1\right]$. Therefore, to compute the $(\epsilon, \delta)$-DP estimate of the average $a = f(\vect{x})$, we can employ the Gaussian mechanism~\cite{dwork2006, dwork2013algorithmic} to release $\hat{a} = a + e$, where $e \sim \mathcal{N}\left(0, \tau^2\right)$ and $\tau = \frac{1}{N\epsilon}\sqrt{2\log \frac{1.25}{\delta}}$.

Each site $s$ holds $N_s$ samples $\vect{x}_s \in \mathbb{R}^{N_s}$ (see Figure \ref{fig:network_structure}(a)). We assume $N_s = \frac{N}{S}$ for simplicity. To compute the global average non-privately, the sites can send $a_s = f(\vect{x}_s)$ to the aggregator and the average computed by aggregator ($a_\mrm{conv} = \frac{1}{S}\sum_{s=1}^S a_s$) is exactly equal to the average we would get if all the data samples were available in the aggregator node. However, with the privacy concern and considering that the aggregator is honest-but-curious, the sites can employ the conventional distributed DP computation technique. That is, the sites will release (send to the aggregator node) an $(\epsilon, \delta)$-DP estimate of the function $f(\vect{x}_s)$ of their local data $\vect{x}_s$. More specifically, each site will generate a noise $e_s \sim \mathcal{N}\left(0, \tau_s^2\right)$ and release/send $\hat{a}_s = f(\vect{x}_s) + e_s$ to the aggregator, where 
\begin{align*}
\tau_s &= \frac{1}{N_s\epsilon}\sqrt{2\log \frac{1.25}{\delta}} = \frac{S}{N\epsilon}\sqrt{2\log \frac{1.25}{\delta}}.
\end{align*}
The aggregator can then compute the $(\epsilon, \delta)$-DP approximate average as $a_\mrm{conv} = \frac{1}{S}\sum_{s=1}^S \hat{a}_s$. We observe
\begin{align*}
a_\mrm{conv} &= \frac{1}{S}\sum_{s=1}^S \hat{a}_s = \frac{1}{S}\sum_{s=1}^S a_s + \frac{1}{S}\sum_{s=1}^S e_s.
\end{align*}
The variance of the estimator $a_\mrm{conv}$ is $S \cdot \dfrac{\tau_s^2}{S^2} = \dfrac{\tau_s^2}{S} \triangleq \tau_\mrm{conv}^2$. However, if we had all the data samples at the aggregator (pooled-data scenario), we could compute the $(\epsilon, \delta)$-DP estimate of the average as $a_\mrm{pool} = \frac{1}{N}\sum_{n=1}^N x_n + e_\mrm{pool}$, where $e_\mrm{pool} \sim \mathcal{N}\left(0, \tau_\mrm{pool}^2\right)$ and $\tau_\mrm{pool} = \frac{1}{N\epsilon}\sqrt{2\log \frac{1.25}{\delta}} = \frac{\tau_s}{S}$. We observe the ratio
\begin{align*}
\frac{\tau_\mrm{pool}^2}{\tau_\mrm{conv}^2} &= \frac{\frac{\tau_s^2}{S^2}}{\frac{\tau_s^2}{S}} = \frac{1}{S}.
\end{align*}
That is, the distributed DP averaging scheme will always result in a worse performance than the DP pooled data case. 

\subsection{Proposed Scheme: $\cape$}\label{sec:cape_details}
\begin{algorithm}[t] 
	\caption{Generate zero-sum noise \label{alg:zero-sum-noise-generation}}
	\begin{algorithmic}[1]
    \Require 
    Local noise variances $\{\tau_s^2\}$; security parameter $\lambda$, threshold value $t$
    \State Each site generate $\hat{e}_s \sim \mathcal{N}(0, \tau_s^2)$
    \State Aggregator computes $\sum_{s=1}^S \hat{e}_s$ according to $\secureagg(\lambda, t)$~\cite{Bonawitz17}
    \State Aggregator broadcasts $\sum_{s=1}^S \hat{e}_s$ to all sites $s \in \{1, \ldots, S\}$
    \State Each site computes $e_s = \hat{e}_s - \frac{1}{S}\sum_{s'=1}^S \hat{e}_{s'}$\\
    \Return $e_s$
    \end{algorithmic}
\end{algorithm}

\noindent\textbf{Trust/Collusion Model. }In our proposed $\cape$ scheme, we assume that all of the $S$ sites and the central node follow the protocol honestly. However, up to $S_C = \ceil*{\frac{S}{3}} - 1$ sites can collude with an adversary to learn about some site's data/function output. The central node is also honest-but-curious (and therefore, can collude with an adversary). An adversary can observe the outputs from each site, as well as the output from the aggregator. Additionally, the adversary can know everything about the colluding sites (including their private data). We denote the number of non-colluding sites with $S_H$ such that $S = S_C + S_H$. Without loss of generality, we designate the non-colluding sites with $\{1, \ldots, S_H\}$ (see Figure \ref{fig:network_structure}(b)).

\noindent\textbf{Correlated Noise. }We design the noise generation procedure such that: i) we can ensure $(\epsilon, \delta)$ differential privacy of the algorithm output from each site and ii) achieve the noise level of the pooled data scenario in the final output from the aggregator. We achieve that by employing a correlated noise addition scheme. Considering the same distributed averaging problem as Section \ref{sec:dist_avg}, we intend to release (and send to the aggregator) $\hat{a}_s = f(\vect{x}_s) + e_s + g_s$ from each site $s$, where $e_s$ and $g_s$ are two noise terms. The variances of $e_s$ and $g_s$ are chosen to ensure that the noise $e_s + g_s$ is sufficient to guarantee $(\epsilon, \delta)$-differential privacy to $f(\vect{x}_s)$. Here, each site generates the noise $g_s \sim \mathcal{N}(0, \tau_g^2)$ locally and the noise $e_s \sim \mathcal{N}(0, \tau_e^2)$ jointly with all other sites such that $\sum_{s=1}^S e_s = 0$. We employ the recently proposed secure aggregation protocol $(\secureagg)$ by Bonawitz et al.~\cite{Bonawitz17} to generate $e_s$ that ensures $\sum_{s=1}^S e_s = 0$. The $\secureagg$ protocol utilizes Shamir's $t$-out-of-$n$ secret sharing~\cite{shamir1979} and is communication-efficient. 

\noindent\textbf{Detailed Description of $\cape$ Protocol. }In our proposed scheme, each site $s \in [S]$ generates a noise term $\hat{e}_s \sim \mathcal{N}(0, \tau_s^2)$ independently. The aggregator computes $\sum_{s=1}^S \hat{e}_s$ according to the $\secureagg$ protocol and broadcasts it to all the sites. Each site then sets $e_s = \hat{e}_s - \frac{1}{S}\sum_{s'=1}^S \hat{e}_{s'}$ to achieve $\sum_{s=1}^S e_s = 0$. We show the complete noise generation procedure in Algorithm \ref{alg:zero-sum-noise-generation}. Note that, the original $\secureagg$ protocol is intended for computing sum of $D$-dimensional vectors in a finite field $\mathbb{Z}^D_\lambda$. However, we need to perform the summation of Gaussian random variables over $\mathbb{R}$ or $\mathbb{R}^D$. To accomplish this, each site can employ a mapping $\mrm{map}: \mathbb{R} \mapsto \mathbb{Z}_\lambda$ that performs a  stochastic quantization~\cite{salman2019} for large-enough $\lambda$. The aggregator can compute the sum in the finite field according to $\secureagg$ and then invoke a reverse mapping $\mrm{remap}: \mathbb{Z}_\lambda \mapsto \mathbb{R}$ before broadcasting $\sum_{s=1}^S \hat{e}_s$ to the sites. Algorithm \ref{alg:zero-sum-noise-generation} can be readily extended to generate array-valued zero-sum noise terms. We observe that the variance of $e_s$ is given by
\begin{align}\label{eqn:cape_noise_variance_e}
    \tau_e^2 &= \mathbb{E}\left[\left(\hat{e}_s - \frac{1}{S}\sum_{s'=1}^S \hat{e}_{s'}\right)^2\right] = \left(1-\frac{1}{S}\right)\tau^2_s.
\end{align}
Additionally, we choose
\begin{align}\label{eqn:cape_noise_variance_g}
\tau_g^2 = \frac{\tau^2_s}{S}.
\end{align}
Each site then generates the noise $g_s \sim \mathcal{N}(0, \tau_g^2)$ independently and sends $\hat{a}_s = f(\vect{x}_s) + e_s + g_s$ to the aggregator. Note that neither of the terms $e_s$ and $g_s$ has large enough variance to provide $(\epsilon, \delta)$-DP guarantee to $f(\vect{x}_s)$. However, we chose the variances of $e_s$ and $g_s$ to ensure that the $e_s + g_s$ is sufficient to ensure a DP guarantee to $f(\vect{x}_s)$ at site $s$. The chosen variance of $g_s$ also ensures that the output from the aggregator would have the same noise variance as the differentially private pooled-data scenario. To see this, observe that we compute the following at the aggregator (in Step \ref{alg:dp_avg:step_a_ag} of Algorithm \ref{alg:dp_avg}): 
\begin{align*}
a_\mrm{cape} 	&= \frac{1}{S} \sum_{s=1}^S \hat{a}_s = \frac{1}{S} \sum_{s=1}^S f(\vect{x}_s) + \frac{1}{S} \sum_{s=1}^S g_s,
\end{align*}
where we used $\sum_s e_s = 0$. The variance of the estimator $a_\mrm{cape}$ is $\tau^2_\mrm{cape} = S \cdot \frac{\tau_g^2}{S^2} = \tau^2_\mrm{pool}$, which is the exactly the same as if all the data were present at the aggregator. This claim is formalized in Lemma~\ref{lemma:cape}. We show the complete algorithm in Algorithm \ref{alg:dp_avg}. The privacy of Algorithm \ref{alg:dp_avg} is given by Theorem \ref{thm:cape}. The communication cost of the $\cape$ scheme is discussed in Appendix~\ref{appendix:cape_comm} in the Supplement.

\begin{algorithm}[t] 
	\caption{Correlation Assisted Private Estimation ($\cape$)\label{alg:dp_avg}}
	\begin{algorithmic}[1]
    \Require Data samples $\{\vect{x}_s\}$, 
    local noise variances $\{\tau_s^2\}$
    \For{$s = 1,\ \ldots,\ S$} \Comment{at each site}
        \State Generate $e_s$ according to Algorithm \ref{alg:zero-sum-noise-generation}
        \State Generate $g_s \sim \mathcal{N}(0, \tau_g^2)$ with $\tau_g^2 = \frac{\tau_s^2}{S}$
        \State Compute and send $\hat{a}_s \gets f(\vect{x}_s) + e_s + g_s$ \label{alg:dp_avg:step_as_hat}
    \EndFor
    \State Compute $a_\mrm{cape} \gets \frac{1}{S} \sum_{s=1}^S \hat{a}_s$ \label{alg:dp_avg:step_a_ag} \Comment{at the aggregator}\\
    \Return $a_\mrm{cape}$
    \end{algorithmic}
\end{algorithm}

\begin{theorem}[Privacy of $\cape$ Algorithm (Algorithm \ref{alg:dp_avg})]\label{thm:cape}
Consider Algorithm \ref{alg:dp_avg} in the distributed data setting of Section \ref{sec:problem_formulation} with $N_s = \frac{N}{S}$ and $\tau_s^2 = \tau^2$ for all sites $s \in [S]$. Suppose that at most $S_C = \ceil*{\frac{S}{3}} - 1$ sites can collude after execution. Then Algorithm \ref{alg:dp_avg} guarantees $(\epsilon, \delta)$-differential privacy for each site, where $(\epsilon,\delta)$ satisfy the relation $\delta = 2\frac{\sigma_z}{\epsilon - \mu_z}\phi\left(\frac{\epsilon - \mu_z}{\sigma_z}\right)$, $\phi(\cdot)$ is the density for standard Normal random variable and $(\mu_z, \sigma_z)$ are given by
\begin{align}
    \mu_z  &= \frac{S^3}{2\tau^2 N^2 (1+S)} \left(\frac{S - S_C + 2}{S - S_C} + \frac{\frac{9}{S - S_C}S_C^2}{S(1+S) - 3S_C^2}\right), 
    	\label{eq:CAPEmu} \\
    \sigma_z^2 &= \frac{S^3}{\tau^2 N^2 (1+S)} \left(\frac{S - S_C + 2}{S - S_C} + \frac{\frac{9}{S - S_C}S_C^2}{S(1+S) - 3S_C^2}\right).
    	\label{eq:CAPEsigma} 
\end{align}
\end{theorem}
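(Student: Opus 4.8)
The plan is to analyze the privacy-loss random variable of the adversary's complete view, exploiting that every observable quantity is an affine function of the jointly Gaussian noise variables $\{\hat{e}_s\}_{s=1}^S$ and $\{g_s\}_{s=1}^S$. Fix a target (honest) site, say site $1$, and two neighboring local datasets $\vect{x}_1,\vect{x}_1'$ differing in a single sample, so that $f(\vect{x}_1)$ moves by at most the local sensitivity $\Delta = \frac{1}{N_s} = \frac{S}{N}$. The only observable quantities whose distribution depends on $f(\vect{x}_1)$ are site $1$'s release $\hat{a}_1 = f(\vect{x}_1)+e_1+g_1$ and the aggregate $a_\mrm{cape} = \frac{1}{S}\sum_s f(\vect{x}_s) + \frac{1}{S}\sum_s g_s$; I would collect these into a vector $\vect{y}$. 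As $f(\vect{x}_1)$ changes, the mean of $\vect{y}$ shifts by $\vect{v} = \Delta\,(1,\tfrac{1}{S})^\top$ while its conditional covariance $\Sigma$ (given everything else the adversary sees) is unchanged. Since $\vect{y}$ is Gaussian, the privacy loss $z = \log\frac{p_{\vect{x}_1}(\vect{y})}{p_{\vect{x}_1'}(\vect{y})}$ is itself Gaussian with $\mu_z = \frac{1}{2}\vect{v}^\top\Sigma^{-1}\vect{v}$ and $\sigma_z^2 = \vect{v}^\top\Sigma^{-1}\vect{v} = 2\mu_z$; the relation $\sigma_z^2 = 2\mu_z$ is exactly what one reads off the stated \eqref{eq:CAPEmu}--\eqref{eq:CAPEsigma}, which I take as confirmation of this structure.

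The core of the argument is computing the effective covariance $\Sigma$. I would first enumerate precisely what the coalition can reconstruct: the broadcast partial sum $\frac{1}{S}\sum_{s'}\hat{e}_{s'}$ (hence $\sum_{s\in\{1,\dots,S_H\}}\hat{e}_s$), the private noise $\{\hat{e}_j,g_j\}$ of the $S_C$ colluding sites, and the remaining honest releases together with the aggregate. Conditioning the jointly Gaussian noise on this information is a standard linear-Gaussian update; using $\sum_s e_s = 0$, the variances $\tau_e^2 = (1-\frac{1}{S})\tau^2$ and $\tau_g^2 = \frac{\tau^2}{S}$ from \eqref{eqn:cape_noise_variance_e}--\eqref{eqn:cape_noise_variance_g}, and the independence of $g_1$ from the coalition's view, I would obtain the residual covariance $\Sigma$ of $(e_1+g_1,\ \frac{1}{S}\sum_s g_s)$. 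Evaluating $\vect{v}^\top\Sigma^{-1}\vect{v}$ should then yield the two additive contributions in the parenthetical of \eqref{eq:CAPEmu}: the first term $\frac{S-S_C+2}{S-S_C}$ is the privacy loss through the local release, and the second term (carrying $S_C^2$ and vanishing at $S_C=0$) is the extra loss that collusion opens through the aggregate. The denominators $S-S_C = S_H$ and $S(1+S)-3S_C^2$ record the dimension of the honest noise subspace and the conditioning at the collusion threshold, respectively.

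I expect the main obstacle to be exactly this conditional-covariance step: the bookkeeping of which linear functionals of the noise the coalition can cancel, and verifying that the worst case over admissible coalitions is attained at the bound $S_C = \ceil*{\frac{S}{3}} - 1$ guaranteed by $\secureagg$. The factor $3$ in $S(1+S)-3S_C^2$ reflects this threshold, and one must check that $\Sigma$ stays positive definite (equivalently $S(1+S) > 3S_C^2$) for all admissible $S_C$, which the threshold ensures; I would verify monotonicity of $\vect{v}^\top\Sigma^{-1}\vect{v}$ in $S_C$ to justify evaluating at the extremal coalition size.

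Finally, I would convert the Gaussian privacy loss into an $(\epsilon,\delta)$ statement. Since $z\sim\mathcal{N}(\mu_z,\sigma_z^2)$, the failure probability is controlled by the Gaussian tail bound $\Pr[z\ge\epsilon] \le \frac{\sigma_z}{\epsilon-\mu_z}\,\phi\!\left(\frac{\epsilon-\mu_z}{\sigma_z}\right)$ for $\epsilon>\mu_z$; symmetrizing over the two neighboring directions (adding versus removing the differing sample) contributes the factor $2$, giving $\delta = 2\frac{\sigma_z}{\epsilon-\mu_z}\phi\!\left(\frac{\epsilon-\mu_z}{\sigma_z}\right)$ and completing the proof. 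This last step is routine once $\mu_z$ and $\sigma_z$ are in hand.
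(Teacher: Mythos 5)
Your proposal follows essentially the same route as the paper's proof: both model the coalition's entire view as jointly Gaussian, observe that only the mean (not the covariance) depends on the neighboring-dataset hypothesis, reduce the privacy loss to a scalar Gaussian $z$ with $\sigma_z^2 = 2\mu_z$, and convert to $(\epsilon,\delta)$ via the tail bound $Q(x) < \phi(x)/x$. The paper organizes the linear algebra differently --- it takes the full view $\vect{y} = [\hat{a}_1,\ldots,\hat{a}_{S_H},\hat{e}]^\top$, writes the joint covariance $\Sigma$, and block-inverts --- but your conditional formulation is the identical quadratic form, since the mean shift is supported on the first coordinate and $[\Sigma^{-1}]_{11} = 1/\mathrm{var}\left(\hat{a}_1 \mid \hat{a}_2,\ldots,\hat{a}_{S_H},\hat{e}\right)$; your enumeration of what the coalition reconstructs (broadcast seed sum, colluders' $\hat{e}_s, g_s$, honest releases) matches the paper's exactly.

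Two points need repair. First, as literally written your vector $\vect{y} = (\hat{a}_1, a_\mrm{cape})$ is degenerate after conditioning: the colluders know their own releases, so given the remaining honest releases $a_\mrm{cape} = \frac{1}{S}\hat{a}_1 + \mathrm{const}$, the conditional covariance is singular and $\vect{v}^\top\Sigma^{-1}\vect{v}$ is undefined. The fix is to drop the aggregate as redundant (the shift $\Delta(1,\tfrac{1}{S})^\top$ lies along the support), making the problem genuinely scalar with $\sigma_z^2 = \Delta^2/\mathrm{var}(\hat{a}_1\mid\mathrm{rest})$ and $\Delta = S/N$; relatedly, the second parenthetical term in \eqref{eq:CAPEmu} is leakage through $\hat{e} = \sum_{s\le S_H}\hat{e}_s$, not ``through the aggregate'' --- once all honest releases are in the view, the aggregate carries no information. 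Second, your plan to ``evaluate at the extremal coalition'' is not merely a sanity check but is required to reproduce the stated constants: the exact computation for general $S_C$ gives $\sigma_z^2 = \frac{S^3}{\tau^2 N^2 (1+S)}\left(\frac{S_C+2}{S_C+1} + \frac{(S+1)S_C^2}{(S-S_C)(S_C+1)\left(S(S_C+1)-S_C^2\right)}\right)$, and the theorem's form --- with $\frac{S-S_C+2}{S-S_C}$ and the factors $9$ and $3$ --- is this expression after substituting $S_C+1 = \frac{S+1}{3}$, i.e.\ the ceiling $S_C = \ceil*{\frac{S}{3}}-1$ (exact when $S \equiv 2 \pmod{3}$); this is also precisely where the paper's intermediate claim $\Sigma_{\hat{\vect{a}}}^{-1} = \frac{S}{(1+S)\tau^2}\left(\matr{I} + \frac{2}{S_H}\vect{1}\vect{1}^\top\right)$ holds, since it needs $S_H = 2(S_C+1)$. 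Your positive-definiteness worry resolves automatically in the exact form ($S(S_C+1)-S_C^2 = S_C(S-S_C)+S > 0$ for all $S_C < S$), and your factor of $2$ in $\delta$ plays the same role as the paper's two-sided bound on $|z|$. With these adjustments your outline completes to the paper's proof.
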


\begin{Rem}
Theorem~\ref{thm:cape} is stated for the symmetric setting: $N_s = \frac{N}{S}$ and $\tau_s^2 = \tau^2\ \forall s \in [S]$. As with many algorithms using the approximate differential privacy, the guarantee holds for a range of $(\epsilon,\delta)$ pairs subject to a tradeoff constraint between $\epsilon$ and $\delta$, as in the simple case in \eqref{eq:GaussMech}.
\end{Rem}

\begin{proof}
As mentioned before, we identify the $S_H$ non-colluding sites with $s \in \{1, \ldots, S_H\} \triangleq \mathbb{S}_H$ and the $S_C$ colluding sites with $s \in \{S_H + 1, \ldots, S\} \triangleq \mathbb{S}_C$. The adversary can observe the outputs from each site (including the aggregator). Additionally, the colluding sites can share their private data and the noise terms, $\hat{e}_s$ and $g_s$ for $s \in \mathbb{S}_C$, with the adversary. For simplicity, we assume that all sites have equal number of samples (i.e., $N_s = \frac{N}{S}$) and $\tau_s^2 = \tau^2$.

To infer the private data of the sites $s \in \mathbb{S}_H$, the adversary can observe $\hat{\vect{a}} = \left[\hat{a}_1, \ldots, \hat{a}_{S_H}\right]^\top \in \mathbb{R}^{S_H}$ and $\hat{e} = \sum_{s \in \mathbb{S}_H} \hat{e}_s$. Note that the adversary can learn the partial sum $\hat{e}$ because they can get the sum $\sum_s \hat{e}_s$ from the aggregator and the noise terms $\{\hat{e}_{S_H + 1}, \ldots, \hat{e}_S\}$ from the colluding sites. Therefore, the vector $\vect{y} = \left[\hat{\vect{a}}^\top, \hat{e}\right]^\top \in \mathbb{R}^{S_H + 1}$ is what the adversary can observe to make inference about the non-colluding sites. To prove differential privacy guarantee, we must show that $\left|\log\frac{g(\vect{y} | \vect{a})}{g(\vect{y} | \vect{a}')}\right| \leq \epsilon$ holds with probability (over the randomness of the mechanism) at least $1-\delta$. Here, $\vect{a} = \left[f(\vect{x}_1), \ldots, f(\vect{x}_{S_H})\right]^\top$ and $g(\cdot | \vect{a})$ and $g(\cdot | \vect{a}')$ are the probability density functions of $\vect{y}$ under $\vect{a}$ and $\vect{a}'$, respectively. The vectors $\vect{a}$ and $\vect{a}'$ differ in only one coordinate (neighboring). Without loss of generality, we assume that $\vect{a}$ and $\vect{a}'$ differ in the first coordinate. We note that the maximum difference is $\frac{1}{N_s}$ as the sensitivity of the function $f(\vect{x_s})$ is $\frac{1}{N_s}$. Recall that we release $\hat{a}_s = f(\vect{x}_s) + e_s + g_s$ from each site. We observe $\forall s \in [S]$: $\mathbb{E}(\hat{a}_s) = f(\vect{x}_s),\ \mrm{var}(\hat{a}_s) = \tau^2$. Additionally, $\forall s_1 \neq s_2 \in [S]$, we have: $\mathbb{E}(\hat{a}_{s_1} \hat{a}_{s_2}) = f(\vect{x}_{s_1}) f(\vect{x}_{s_2}) - \frac{\tau^2}{S}$. That is, the random variable $\hat{\vect{a}}$ is $\mathcal{N}(\vect{a}, \Sigma_{\hat{\vect{a}}})$, where $\Sigma_{\hat{\vect{a}}} = (1+\frac{1}{S})\tau^2 \matr{I} - \vect{1}\vect{1}^\top \frac{\tau^2}{S} \in \mathbb{R}^{S_H \times S_H}$ and $\vect{1}$ is a vector of all ones. Without loss of generality, we can assume~\cite{dwork2013algorithmic} that $\vect{a} = \vect{0}$ and $\vect{a}' = \vect{a} - \vect{v}$, where $\vect{v} = \left[\frac{1}{N_s},0, \ldots, 0\right]^\top$. Additionally, the random variable $\hat{e}$ is $\mathcal{N}(0, \tau^2_{\hat{e}})$, where $\tau^2_{\hat{e}} = S_H \tau^2$. Therefore, $g(\vect{y} | \vect{a})$ is the density of $\mathcal{N}(\vect{0}, \Sigma)$, where
\[
\Sigma =
\begin{bmatrix}
    \Sigma_{\hat{\vect{a}}} 										& \Sigma_{\hat{\vect{a}}\hat{e}} \\
    \Sigma_{\hat{\vect{a}}\hat{e}}^\top			& \tau^2_{\hat{e}}
\end{bmatrix} \in \mathbb{R}^{(S_H + 1) \times (S_H + 1)}.
\]
With some simple algebra, we can find the expression for $\Sigma_{\hat{\vect{a}}\hat{e}}$: $\Sigma_{\hat{\vect{a}}\hat{e}} = \left(1-\frac{S_H}{S}\right)\tau^2 \vect{1} \in \mathbb{R}^{S_H}$. If we denote $\tilde{\vect{v}} = \left[\vect{v}^\top, 0\right]^\top \in \mathbb{R}^{S_H+1}$ then we observe
\begin{align*}
    \left|\log\frac{g(\vect{y} | \vect{a})}{g(\vect{y} | \vect{a}')}\right| &= \left|-\frac{1}{2}\left( \vect{y}^\top\Sigma^{-1}\vect{y} - \left(\vect{y} + \tilde{\vect{v}}\right)^\top\Sigma^{-1}\left(\vect{y} + \tilde{\vect{v}}\right)\right)\right| \\
    &= \left|\frac{1}{2}\left( 2\vect{y}^\top\Sigma^{-1}\tilde{\vect{v}} + \tilde{\vect{v}}^\top\Sigma^{-1}\tilde{\vect{v}}\right)\right| \\
    &= \left|\vect{y}^\top\Sigma^{-1}\tilde{\vect{v}} + \frac{1}{2}\tilde{\vect{v}}^\top\Sigma^{-1}\tilde{\vect{v}}\right| = |z|,
\end{align*}
where $z = \vect{y}^\top\Sigma^{-1}\tilde{\vect{v}} + \frac{1}{2}\tilde{\vect{v}}^\top\Sigma^{-1}\tilde{\vect{v}}$. Using the matrix inversion lemma for block matrices~\cite[Section 0.7.3]{horn2012} and some algebra, we have
\[
\Sigma^{-1} = 
\begin{bmatrix}
    \Sigma_{\hat{\vect{a}}}^{-1} + \frac{1}{K} \Sigma_{\hat{\vect{a}}}^{-1} \Sigma_{\hat{\vect{a}}\hat{e}}	 \Sigma_{\hat{\vect{a}}\hat{e}}^\top\Sigma_{\hat{\vect{a}}}^{-1}								& -\frac{1}{K} \Sigma_{\hat{\vect{a}}}^{-1} \Sigma_{\hat{\vect{a}}\hat{e}} \\
    -\frac{1}{K}\Sigma_{\hat{\vect{a}}\hat{e}}^\top\Sigma_{\hat{\vect{a}}}^{-1}			& \frac{1}{K}
\end{bmatrix},
\]
where $\Sigma_{\hat{\vect{a}}}^{-1} = \frac{S}{(1+S)\tau^2}\left(\matr{I} + \frac{2}{S_H}\vect{1}\vect{1}^\top\right)$ and $K = \tau^2_{\hat{e}} - \Sigma_{\hat{\vect{a}}\hat{e}}^\top\Sigma_{\hat{\vect{a}}}^{-1}\Sigma_{\hat{\vect{a}}\hat{e}}$. Note that $z$ is a Gaussian random variable $\mathcal{N}(\mu_z, \sigma_z^2)$ with parameters $\mu_z 	= \frac{1}{2}\tilde{\vect{v}}^\top\Sigma^{-1}\tilde{\vect{v}}$ and $\sigma_z^2 = \tilde{\vect{v}}^\top\Sigma^{-1}\tilde{\vect{v}}$ given by \eqref{eq:CAPEmu} and \eqref{eq:CAPEsigma}, respectively. Now, we observe
\begin{align*}
    \Pr\left[\left|\log\frac{g(\vect{y} | \vect{a})}{g(\vect{y} | \vect{a}')}\right| \leq \epsilon\right] &= \Pr\left[\left|z\right| \leq \epsilon\right] = 1 - 2 \Pr\left[z > \epsilon\right] \\
    &= 1 - 2Q\left(\frac{\epsilon - \mu_z}{\sigma_z}\right)\\
    &> 1 - 2\frac{\sigma_z}{\epsilon - \mu_z}\phi\left(\frac{\epsilon - \mu_z}{\sigma_z}\right),
\end{align*}
where $Q(\cdot)$ is the Q-function~\cite{qfunc} and $\phi(\cdot)$ is the density for standard Normal random variable. The last inequality follows from the bound $Q(x) < \frac{\phi(x)}{x}$~\cite{qfunc}. Therefore, the proposed $\cape$ ensures $(\epsilon, \delta)$-DP with $\delta = 2\frac{\sigma_z}{\epsilon - \mu_z}\phi\left(\frac{\epsilon - \mu_z}{\sigma_z}\right)$ for each site, assuming that the number of colluding sites is at-most $\ceil*{\frac{S}{3}} - 1$. As the local datasets are disjoint and differential privacy is invariant under post processing, the release of $a_\mrm{cape}$ also satisfies $(\epsilon, \delta)$-DP.
\end{proof}

\begin{Rem}
We use the $\secureagg$ protocol~\cite{Bonawitz17} to generate the zero-sum noise terms by mapping floating point numbers to a finite field. Such mappings are shown to be vulnerable to certain attacks~\cite{mironov2012}. However, the floating point implementation issues are out of scope for this paper. We refer the reader to the work of Balcer and Vadhan~\cite{balcer2018} for possible remedies. We believe a very interesting direction of future work would be to address the issue in our distributed data setting.
\end{Rem}

\subsection{Utility Analysis}\label{sec:cape_utility}
The goal is to ensure $(\epsilon, \delta)$-DP for each site and achieve $\tau_\mrm{cape}^2 = \tau_\mrm{pool}^2$ at the aggregator (see Lemma~\ref{lemma:cape}). The $\cape$ protocol guarantees $(\epsilon, \delta)$-DP with $\delta = 2\frac{\sigma_z}{\epsilon - \mu_z}\phi\left(\frac{\epsilon - \mu_z}{\sigma_z}\right)$. We claim that this $\delta$ guarantee is much better than the $\delta$ guarantee in the conventional distributed DP scheme. We empirically validate this claim by comparing $\delta$ with $\delta_\mrm{conv}$ in Appendix~\ref{appendix:eff_delta} in the Supplement. Here, $\delta_\mrm{conv}$ is the smallest $\delta$ guarantee we can afford in the conventional distributed DP scheme to achieve the same noise variance as the pooled-data scenario for a given $\epsilon$. Additionally, we empirically compare $\delta$ and $\delta_\mrm{conv}$ for weaker collusion assumptions in Appendix~\ref{appendix:fewer_colluding_sites} in the Supplement. In both cases, we observe that $\delta$ is always smaller than $\delta_\mrm{conv}$. That is, for achieving the same noise level at the aggregator output (and therefore the same utility) as the pooled data scenario, we are ensuring a much better privacy guarantee by employing the $\cape$ scheme over the conventional approach. 

\begin{lemma}\label{lemma:cape}
Consider the symmetric setting: $N_s = \frac{N}{S}$ and $\tau_s^2 = \tau^2$ for all sites $s \in [S]$. Let the variances of the noise terms $e_s$ and $g_s$ (Step \ref{alg:dp_avg:step_as_hat} of Algorithm \ref{alg:dp_avg}) be $\tau_e^2 = \left(1-\frac{1}{S}\right)\tau^2$ and $\tau_g^2 = \frac{\tau^2}{S}$, respectively. If we denote the variance of the additive noise (for preserving privacy) in the pooled data scenario by $\tau_\mrm{pool}^2$ and the variance of the estimator $a_\mrm{cape}$ (Step \ref{alg:dp_avg:step_a_ag} of Algorithm \ref{alg:dp_avg}) by $\tau_\mrm{cape}^2$ then Algorithm \ref{alg:dp_avg} achieves the same expected error as the pooled-data scenario (i.e., $\tau_\mrm{pool}^2 = \tau_\mrm{cape}^2$). 
\end{lemma}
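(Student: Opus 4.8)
The plan is to compute $\tau_\mrm{cape}^2$ and $\tau_\mrm{pool}^2$ separately as second moments and show they coincide. First I would expand the aggregator's estimator from Step \ref{alg:dp_avg:step_a_ag} by substituting $\hat{a}_s = f(\vect{x}_s) + e_s + g_s$:
\begin{align*}
a_\mrm{cape} = \frac{1}{S}\sum_{s=1}^S \hat{a}_s = \frac{1}{S}\sum_{s=1}^S f(\vect{x}_s) + \frac{1}{S}\sum_{s=1}^S e_s + \frac{1}{S}\sum_{s=1}^S g_s.
\end{align*}
The crux is then to invoke the defining property of the correlated noise produced by Algorithm \ref{alg:zero-sum-noise-generation}, namely $\sum_{s=1}^S e_s = 0$, which eliminates the middle term exactly. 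This is the whole point of the construction: the term $e_s$ carries most of the per-site privacy budget (variance $\tau_e^2 = (1-\tfrac{1}{S})\tau^2$) yet contributes nothing to the aggregate, so only the independent terms $g_s$ remain as error.

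Next I would compute the variance of the surviving noise. Since $\frac{1}{S}\sum_{s} f(\vect{x}_s)$ is deterministic and the $g_s \sim \mathcal{N}(0, \tau_g^2)$ are mutually independent across sites, I would use $\tau_g^2 = \frac{\tau^2}{S}$ from \eqref{eqn:cape_noise_variance_g} to obtain
\begin{align*}
\tau_\mrm{cape}^2 = \mrm{var}\!\left(\frac{1}{S}\sum_{s=1}^S g_s\right) = \frac{1}{S^2}\sum_{s=1}^S \tau_g^2 = \frac{1}{S^2}\cdot S \cdot \frac{\tau^2}{S} = \frac{\tau^2}{S^2}.
\end{align*}

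The remaining step is to express $\tau_\mrm{pool}^2$ in comparable terms. In the pooled scenario the mean $f(\vect{x}) = \frac{1}{N}\sum_n x_n$ has sensitivity $\frac{1}{N}$, so the Gaussian mechanism sets $\tau_\mrm{pool} = \frac{1}{N\epsilon}\sqrt{2\log\frac{1.25}{\delta}}$; at a single site the sensitivity is $\frac{1}{N_s} = \frac{S}{N}$, giving $\tau = \tau_s = \frac{S}{N\epsilon}\sqrt{2\log\frac{1.25}{\delta}} = S\,\tau_\mrm{pool}$, hence $\tau^2 = S^2\,\tau_\mrm{pool}^2$. Substituting this into the previous display yields $\tau_\mrm{cape}^2 = \tau^2/S^2 = \tau_\mrm{pool}^2$, which is the claim. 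I do not expect a genuine obstacle here, since this is a short second-moment calculation; the only points requiring care are the exactness (rather than mere approximate cancellation) of $\sum_s e_s = 0$, which is guaranteed by construction in Algorithm \ref{alg:zero-sum-noise-generation}, and the bookkeeping that ties the per-site noise variance $\tau^2$ to $\tau_\mrm{pool}^2$ through the sensitivity scaling $\frac{1}{N_s} = \frac{S}{N}$.
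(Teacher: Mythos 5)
Your proposal is correct and matches the paper's own proof in Appendix~\ref{appendix:cape_lemma} essentially step for step: the same cancellation $\sum_{s=1}^S e_s = 0$ at the aggregator, the same variance computation $\tau_\mrm{cape}^2 = S\cdot\frac{\tau_g^2}{S^2} = \frac{\tau^2}{S^2}$, and the same sensitivity bookkeeping $\tau_\mrm{pool} = \frac{\tau_s}{S}$ tying the local and pooled noise scales together. No gaps; nothing to add.
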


\begin{proof}
The proof is given in Appendix \ref{appendix:cape_lemma} in the Supplement.
\end{proof}

\begin{prop}\label{prop:gain}(Performance gain) If the local noise variances are $\{\tau_s^2\}$ for $s \in [S]$ then the $\cape$ algorithm achieves a gain of $G = \frac{\tau_\mrm{conv}^2}{\tau_\mrm{cape}^2} = S$, where  $\tau_\mrm{conv}^2$ and $\tau_\mrm{cape}^2$ are the noise variances of the final estimate at the aggregator in the conventional distributed DP scheme and the $\cape$ scheme, respectively.
\end{prop}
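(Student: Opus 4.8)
The plan is to prove the claim by directly computing the two aggregator-side noise variances $\tau_\mrm{conv}^2$ and $\tau_\mrm{cape}^2$ under the stated local variances $\{\tau_s^2\}$ and forming their ratio. No inequality or concentration argument is needed here; the entire content is a careful accounting of which noise terms survive the averaging step at the aggregator.

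First I would recall the conventional scheme of Section \ref{sec:dist_avg}. There each site releases $\hat{a}_s = f(\vect{x}_s) + e_s$ with $e_s \sim \mathcal{N}(0,\tau_s^2)$ drawn \emph{independently} across sites, and the aggregator forms $a_\mrm{conv} = \frac{1}{S}\sum_{s=1}^S \hat{a}_s$. Since the $e_s$ are mutually independent, their variances add under the $\frac{1}{S}$ scaling, so that
\begin{align*}
\tau_\mrm{conv}^2 = \frac{1}{S^2}\sum_{s=1}^S \tau_s^2.
\end{align*}

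Next I would compute $\tau_\mrm{cape}^2$. In Algorithm \ref{alg:dp_avg} each site releases $\hat{a}_s = f(\vect{x}_s) + e_s + g_s$, where the $e_s$ are the zero-sum correlated terms from Algorithm \ref{alg:zero-sum-noise-generation} (so $\sum_{s=1}^S e_s = 0$) and the $g_s \sim \mathcal{N}(0,\tau_g^2)$ with $\tau_g^2 = \frac{\tau_s^2}{S}$ are independent. The key observation is that the correlated terms cancel exactly in the aggregate, giving $a_\mrm{cape} = \frac{1}{S}\sum_{s=1}^S f(\vect{x}_s) + \frac{1}{S}\sum_{s=1}^S g_s$, so only the independent $g_s$ contribute randomness at the aggregator. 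Adding their variances under the $\frac{1}{S}$ scaling yields
\begin{align*}
\tau_\mrm{cape}^2 = \frac{1}{S^2}\sum_{s=1}^S \tau_g^2 = \frac{1}{S^3}\sum_{s=1}^S \tau_s^2,
\end{align*}
which is consistent with the identity $\tau_\mrm{cape}^2 = \tau_\mrm{pool}^2$ established in Lemma \ref{lemma:cape}.

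Finally I would form the ratio. Both expressions share the common factor $\sum_{s=1}^S \tau_s^2$, which cancels, leaving $G = \tau_\mrm{conv}^2/\tau_\mrm{cape}^2 = S$, as claimed. There is no genuine obstacle; the only thing to be careful about is the bookkeeping of which terms are independent (and hence add in variance) and which cancel. The conceptually important point, and the reason the gain is exactly $S$ rather than something data-dependent, is that the zero-sum noise $e_s$ vanishes in the aggregate while the per-site budget $\tau_s^2$ enters both schemes in the same way (up to the extra $\frac{1}{S}$ factor in $\tau_g^2$), so the $\{\tau_s^2\}$ cancel and the gain is independent of the particular local variances.
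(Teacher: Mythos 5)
Your proposal is correct and follows essentially the same route as the paper's proof in Appendix~\ref{appendix:perf_gain}: compute $\tau_\mrm{conv}^2 = \frac{1}{S^2}\sum_{s=1}^S \tau_s^2$ from the independent noise in the conventional scheme, compute $\tau_\mrm{cape}^2 = \frac{1}{S^3}\sum_{s=1}^S \tau_s^2$ after the zero-sum terms $e_s$ cancel and only the $g_s$ (each with variance $\tau_s^2/S$) survive, and take the ratio. Your added remarks on independence bookkeeping and consistency with Lemma~\ref{lemma:cape} are accurate but do not change the argument.
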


\begin{proof}
The proof is given in Appendix \ref{appendix:perf_gain} in the Supplement.
\end{proof}
Note that, even in the case of site drop-out, we achieve $\sum_s e_s = 0$, as long as the number of active sites is above some threshold (see Bonawitz et al.~\cite{Bonawitz17} for details). Therefore, the performance gain of $\cape$ remains the same irrespective of the number of dropped-out sites.

\begin{Rem}[Unequal Sample Sizes at Sites]
Note that the $\cape$ algorithm achieves the same noise variance as the pooled-data scenario (i.e., $\tau_\mrm{cape}^2 = \tau_\mrm{pool}^2$) in the symmetric setting: $N_s = \frac{N}{S}$ and $\tau_s^2 = \tau^2$ for all sites $s \in [S]$. In general, the ratio $H(\vect{n}) = \frac{\tau_\mrm{cape}^2}{\tau_\mrm{pool}^2}$, where $\vect{n} \triangleq [N_1,\ N_2,\ \ldots, N_S]$, is a function of the sample sizes in the sites. We observe: $H(\vect{n}) = \frac{N^2}{S^3} \sum_{s=1}^S \frac{1}{N_s^2}$. As $H(\vect{n})$ is a Schur-convex function, it can be shown using majorization theory~\cite{majorization} that $1 \leq H(\vect{n}) \leq \frac{N^2}{S^3}\left(\frac{1}{\left(N - S + 1\right)^2} + S - 1\right)$, where the minimum is achieved for the symmetric setting. That is, $\cape$ achieves the smallest noise variance at the aggregator in the symmetric setting. For distributed systems with unequal sample sizes at the sites and/or different $N_s$ and $\tau_s$ at each site, we compute the weighted sum $a_\mrm{cape} = \sum_{s=1}^S w_s \hat{a}_s$ at the aggregator. In order to achieve the same noise level as the pooled data scenario, we need to ensure $\sum_{s=1}^S w_s e_s = 0$ and $\text{var}\left[\sum_{s=1}^s w_s g_s\right] = \tau_\mrm{pool}^2$. A scheme for doing so is shown in~\cite{imtiaz2018}. In this paper, we keep the analysis for the case $N_s = \frac{N}{S}, \forall s\in [S]$ for simplicity. 
\end{Rem}

\subsection{Scope of $\cape$}\label{sec:cape_scope}
$\cape$ is motivated by scientific research collaborations that are common in medicine and biology. Privacy regulations prevent sites from sharing the local raw data. Additionally, the data is often high dimensional (e.g., in neuroimaging) and sites have small sample sizes. Joint learning across datasets can yield discoveries that are impossible to obtain from a single site. $\cape$ can benefit functions $f$ with sensitivities satisfying some conditions (see Proposition \ref{prop:low_sensitivity}). In addition to the averaging function, many functions of interest have sensitivities that satisfy such conditions. Examples include the empirical average loss functions used in machine learning and deep neural networks. Additionally, we can use the Stone-Weierstrass theorem~\cite{rudin1976} to approximate a loss function $f$ and apply $\cape$, as we show in Section \ref{sec:capefm}. Furthermore, we can use the nomographic representation of functions to approximate a desired function in a decentralized manner~\cite{Kolmogoro57:rep1,Sprecher65representation,buck1982nomographic,Sprecher14representation} (for applications in communications~\cite{Nazer07,LimmerMS:15nomographic,Goldenbaum13tcom,Goldenbaum13tsp}), while keeping the data differentially private. More common applications include gradient based optimization algorithms, $k$-means clustering and estimating probability distributions.

\begin{prop}\label{prop:low_sensitivity}
Consider a distributed setting with $S > 1$ sites in which site $s \in [S]$ has a dataset $\mathbb{D}_s$ of $N_s$ samples and $\sum_{s=1}^S N_s = N$. Suppose the sites are computing a function $f(\mathbb{D})$ with $\mathcal{L}_p$ sensitivity $\Delta(N)$ employing the $\cape$ scheme. Denote $\vect{n} = [N_1,\ N_2,\ \ldots, N_S]$ and observe the ratio $H(\vect{n}) = \frac{\tau_\mrm{cape}^2}{\tau_\mrm{pool}^2} = \frac{\sum_{s=1}^S \Delta^2(N_s)}{S^3 \Delta^2(N)}$. Then the $\cape$ protocol achieves $H(\vect{n}) = 1$, if 
\begin{itemize}
\item for convex $\Delta (N)$ we have: $\Delta\left(\frac{N}{S}\right) = S \Delta (N)$
\item for general $\Delta (N)$ we have: $S^3 \Delta^2(N) = \sum_{s=1}^S \Delta^2(N_s)$.
\end{itemize}
\end{prop}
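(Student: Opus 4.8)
The plan is to first re-derive the quoted formula $H(\vect{n}) = \frac{\sum_{s=1}^S \Delta^2(N_s)}{S^3 \Delta^2(N)}$ from the variance bookkeeping of the $\cape$ protocol, and then dispatch the two sufficient conditions in turn. By the Gaussian mechanism \eqref{eq:GaussMech}, guaranteeing $(\epsilon,\delta)$-DP for $f(\mathbb{D}_s)$ at a site whose local sensitivity is $\Delta(N_s)$ forces the total site noise variance to be $\tau_s^2 = \frac{2\log(1.25/\delta)}{\epsilon^2}\,\Delta^2(N_s)$, i.e.\ $\tau_s^2 = c\,\Delta^2(N_s)$ with a constant $c$ common to every site and to the pooled computation. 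Following the $\cape$ construction, each site's independent component has variance $\tau_g^2 = \tau_s^2/S$, and since the zero-sum terms cancel ($\sum_s e_s = 0$) the aggregator's estimate $a_\mrm{cape} = \frac{1}{S}\sum_s \hat a_s$ carries only $\frac{1}{S}\sum_s g_s$. Hence
\[
\tau_\mrm{cape}^2 = \frac{1}{S^2}\sum_{s=1}^S \frac{\tau_s^2}{S} = \frac{c}{S^3}\sum_{s=1}^S \Delta^2(N_s),
\]
and dividing by the pooled variance $\tau_\mrm{pool}^2 = c\,\Delta^2(N)$ cancels $c$ and yields the stated $H(\vect{n})$.

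The general bullet is then immediate: from this formula $H(\vect{n}) = 1$ holds exactly when numerator and denominator agree, i.e.\ $S^3\Delta^2(N) = \sum_{s=1}^S \Delta^2(N_s)$. For the convex bullet I would specialize to the symmetric configuration $N_s = N/S$ that the accompanying theorem targets. Substituting $N_s = N/S$ collapses the sum to $S\,\Delta^2(N/S)$, so that $H = \frac{\Delta^2(N/S)}{S^2\Delta^2(N)}$, and the functional equation $\Delta(N/S) = S\Delta(N)$ makes this ratio exactly $1$.

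The remaining role of convexity is to certify that this symmetric value is the best $\cape$ can attain, linking the statement to the Schur-convexity remark that precedes it. Since $\Delta \ge 0$, convexity of $\Delta$ implies convexity of $\Delta^2$ (its second derivative $2(\Delta')^2 + 2\Delta\Delta''$ is nonnegative), so Jensen's inequality applied to the points $N_1,\dots,N_S$ with mean $N/S$ gives $\frac{1}{S}\sum_{s=1}^S \Delta^2(N_s) \ge \Delta^2(N/S)$. Combining this with $\Delta(N/S) = S\Delta(N)$ shows $H(\vect{n}) \ge 1$ for every $\vect{n}$, with equality precisely in the symmetric case; thus under the convex hypothesis $\cape$ matches the pooled variance and can never beat it.

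The only genuinely delicate point is the variance accounting in the first step: one must verify that the Gaussian-mechanism constant $\tfrac{2\log(1.25/\delta)}{\epsilon^2}$ is identical for the per-site and pooled computations (so it cancels in the ratio) and that the correlated terms $e_s$ contribute nothing to $\tau_\mrm{cape}^2$. Everything afterward is a one-line consequence of the formula (general case) together with a textbook Jensen application backed by the $\Delta^2$-convexity observation (convex case).
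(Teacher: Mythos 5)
Your proof is correct, and its variance bookkeeping coincides exactly with the paper's: both write $\tau_s = C\,\Delta(N_s)$ and $\tau_\mrm{pool} = C\,\Delta(N)$ with the same Gaussian-mechanism constant, note that the zero-sum terms $e_s$ cancel at the aggregator so that $\tau_\mrm{cape}^2 = \frac{1}{S^2}\sum_{s=1}^S \frac{\tau_s^2}{S} = \frac{C^2}{S^3}\sum_{s=1}^S \Delta^2(N_s)$, and read the general bullet directly off the resulting formula for $H(\vect{n})$. Where you genuinely diverge is the convex bullet: the paper invokes majorization theory, citing its supplement lemma that the symmetric convex sum $K(\vect{n}) = \sum_{s=1}^S \Delta^2(N_s)$ is Schur-convex and hence minimized at $\vect{n}_\mrm{sym}$, where the functional equation $\Delta(N/S) = S\Delta(N)$ forces $H = 1$; you reach the same minimality statement by a direct Jensen application, $\frac{1}{S}\sum_{s=1}^S \Delta^2(N_s) \geq \Delta^2(N/S)$, which is more elementary and avoids the majorization machinery entirely (indeed, the paper's Schur-convexity lemma is itself proved from symmetry plus convexity, so your argument extracts exactly the piece of it that this proposition uses). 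You also patch a small gap the paper leaves silent: the proposition hypothesizes convexity of $\Delta$, but the paper's proof switches without comment to convexity of $\Delta^2$; you justify the passage via nonnegativity of $\Delta$ (your second-derivative computation assumes twice differentiability; the cleaner differentiability-free version is that $x \mapsto x^2$ is convex and nondecreasing on $[0,\infty)$, so its composition with a nonnegative convex $\Delta$ is convex). Two minor caveats: your claim that equality holds \emph{precisely} in the symmetric case requires strict convexity of $\Delta^2$ (for merely convex $\Delta^2$ equality can occur at asymmetric $\vect{n}$), though this aside is not needed for the stated sufficiency; and what the paper's Schur-convexity route buys that Jensen does not is the full majorization ordering of $K$, which the remark preceding the proposition exploits to obtain an \emph{upper} bound on $H(\vect{n})$ as well.
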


\begin{proof}
We review some definitions and lemmas~\cite[Proposition C.2]{majorization} necessary for the proof in Appendix~\ref{appendix:low_sensitivity} in the Supplement. As the sites are computing the function $f$ with $\mathcal{L}_p$ sensitivity $\Delta(N)$, the local noise standard deviation for preserving privacy is proportional to $\Delta(N_s)$ by Gaussian mechanism~\cite{dwork2006}. It can be written as: $\tau_s = \Delta(N_s) C$, where $C$ is a constant for a given $(\epsilon, \delta)$ pair. Similarly, the noise standard deviation in the pooled data scenario can be written as: $\tau_\mrm{pool} = \Delta(N) C$. Now, the final noise variance at the aggregator for $\cape$ protocol is: $\tau^2_\mrm{cape} = \sum_{s=1}^S \frac{\tau_g^2}{S^2} = \frac{1}{S^3} \sum_{s=1}^S \Delta^2(N_s) C^2$. Now, we observe the ratio: $H(\vect{n}) = \frac{\tau_\mrm{cape}^2}{\tau_\mrm{pool}^2} = \frac{\sum_{s=1}^S \Delta^2(N_s)}{S^3 \Delta^2(N)}$. As we want to achieve the same noise variance as the pooled-data scenario, we need $S^3 \Delta^2(N) = \sum_{s=1}^S \Delta^2(N_s)$, which proves the case for general sensitivity function $\Delta(N)$. Now, if $\Delta^2(N)$ is convex then the by Lemma~\ref{lem: schurconx} (Supplement) the function $K(\vect{n}) = \sum_{s=1}^S \Delta^2(N_s)$ is Schur-convex. Thus the minimum of $K(\vect{n})$ is obtained when $\vect{n} = \vect{n}_\mrm{sym}$. We observe: $K_{\min}(\vect{n}) = \sum_{s=1}^S \Delta^2\left(\frac{N}{S}\right) = S \cdot 	\Delta^2\left(\frac{N}{S}\right)$. Therefore, when $\Delta(N)$ is convex, we achieve $H(\vect{n}) = 1$ if $\Delta(\frac{N}{S}) = S \Delta(N)$. 
\end{proof}

\section{Application of $\cape$: Distributed Computation of Functions}
As mentioned before, the $\cape$ framework can benefit any distributed differentially private function computation, as long as the sensitivity of the function satisfies the conditions outlined in Proposition \ref{prop:low_sensitivity}. In this section, we propose an algorithm that is specifically suited for privacy-preserving computation of cost functions in distributed settings. Let us consider a cost function $f_D(\vect{w}): \mathbb{R}^D \mapsto \mathbb{R}$ that depends on private data distributed across $S$ sites. A central aggregator (see Figure \ref{fig:network_structure}) wishes find the minimizer $\vect{w}^*$ of $f_D(\vect{w})$. This is a common scenario in distributed machine learning. Now, the aggregator is not trusted and the sites may collude with an adversary to learn information about the other sites. Since computing the $\vect{w}^*$ by minimizing the expected cost/loss involves the sensitive information of the local datasets, we need to ensure that $\vect{w}^*$ is computed in a privacy-preserving way. In particular, we want to develop an algorithm to compute the $(\epsilon, \delta)$ differentially private approximate to $\vect{w}^*$, denoted $\hat{\vect{w}}^*$, in a distributed setting that produces a result as close as possible to the non-private pooled $\vect{w}^*$.

Inline with our discussions in the previous sections, let us assume that each site $s \in [S]$ holds a dataset $\mathbb{D}_s$ of $N_s$ samples $\vect{x}_{s, n} \in \mathbb{R}^D$. The total sample size across all sites is $N = \sum_{s=1}^S N_s$. The cost incurred by a $\vect{w} \in \mathbb{R}^D$ due to one data sample $\vect{x}_{s, n}$ is $f(\vect{x}_{s, n}; \vect{w}) : \mathbb{R}^D \times \mathbb{R}^D \mapsto \mathbb{R}$. We need to minimize the average cost to find the optimal $\vect{w}^*$. The empirical average cost for a particular $\vect{w}$ over all the samples is expressed as
\begin{align}\label{eqn:empirical_avg_cost}
f_D(\vect{w}) &= \frac{1}{N} \sum_{s=1}^S \sum_{n=1}^{N_s} f(\vect{x}_{s,n}; \vect{w}).
\end{align}
Therefore, we have
\begin{align*}
\vect{w}^* &= \argmin_\vect{w} f_D(\vect{w}) = \argmin_\vect{w} \frac{1}{N} \sum_{s=1}^S \sum_{n=1}^{N_s} f(\vect{x}_{s,n}; \vect{w}).
\end{align*}
For centralized optimization, we can find a differentially-private approximate to  $\vect{w}^*$ using output perturbation~\cite{anand2011} or objective perturbation~\cite{anand2011,nozari2016}. We propose using the \emph{functional mechanism}~\cite{zhang2012}, which is a form of objective perturbation and is more amenable to distributed implementation. It uses a polynomial representation of the cost function and can be used for any differentiable and continuous cost function. We perturb each term in the polynomial representation of $f_D(\vect{w})$ to get modified cost function $\hat{f}_D(\vect{w})$. The minimizer $\hat{\vect{w}}^* = \argmin_\vect{w} \hat{f}_D(\vect{w})$ guarantees differential privacy.

\subsection{Functional Mechanism}
We first review the functional mechanism~\cite{zhang2012} in the pooled data case. Suppose $\phi(\vect{w})$ is a monomial function of the entries of $\vect{w}$: $\phi(\vect{w}) = w_1^{c_1} w_2^{c_2} \ldots w_D^{c_D}$, for some set of exponents $\{c_d : d \in [D]\} \subseteq \mathbb{N}$. Let us define the set $\Phi_j$ of all $\phi(\vect{w})$ with degree $j$ as
\begin{align*}
\Phi_j &= \left\{w_1^{c_1} w_2^{c_2} \ldots w_D^{c_D} : \sum_{d=1}^D c_d = j \right\}.
\end{align*}
For example, $\Phi_0 = \{1\}$, $\Phi_1 = \{w_1,\ w_2,\ \ldots,\ w_D\}$, $\Phi_2 = \{w_{d_1} w_{d_2} : d_1, d_2 \in [D]\}$, etc. Now, from the Stone-Weierstrass Theorem~\cite{rudin1976}, any differentiable and continuous cost function $f(\vect{x}_n; \vect{w})$ can be written as a (potentially infinite) sum of monomials of $\{w_d\}$:
\begin{align*}
f(\vect{x}_n; \vect{w}) &= \sum_{j=0}^J \sum_{\phi \in \Phi_j}\lambda_{n\phi}\phi(\vect{w}),
\end{align*}
for some $J \in [0,\infty]$. Here, $\lambda_{n\phi}$ denotes the coefficient of $\phi(\vect{w})$ in the polynomial and is a function of the $n$-th data sample. Plugging the expression of $f(\vect{x}_n; \vect{w})$ in \eqref{eqn:empirical_avg_cost}, we can express the empirical average cost over all $N$ samples as
\begin{align}\label{eqn:empirical_cost}
f_D(\vect{w}) &= \sum_{j=0}^J \sum_{\phi \in \Phi_j}\left(\frac{1}{N} \sum_{n=1}^N \lambda_{n\phi}\right)\phi(\vect{w}).
\end{align}
The function $f_D(\vect{w})$ depends on the data samples only through $\{\lambda_{n\phi}\}$. As the goal is to approximate $f_D(\vect{w})$ in a differentially-private way, one can compute these $\lambda_{n\phi}$ to satisfy differential privacy~\cite{zhang2012}. Let us consider two ``neighboring'' datasets: $\mathbb{D} = \{\vect{x}_1,\ \ldots,\ \vect{x}_{N-1},\ \vect{x}_N\}$ and $\mathbb{D}' = \{\vect{x}_1,\ \ldots,\ \vect{x}_{N-1},\ \vect{x'}_N\}$, differing in a single data point (i.e., the last one). Zhang et al.~\cite{zhang2012} computed the $\mathcal{L}_1$ sensitivity $\Delta^\mrm{dp-fm}$ as
\begin{align*}
\left\|\sum_{j=0}^J \sum_{\phi \in \Phi_j} \frac{1}{N} \left(\sum_{\mathbb{D}} \lambda_{n\phi} - \sum_{\mathbb{D}'} \lambda_{n\phi}\right)\right\|_1 \\
\leq \frac{2}{N} \max_n \sum_{j=0}^J \sum_{\phi \in \Phi_j} \|\lambda_{n\phi}\|_1 \triangleq \Delta^\mrm{dp-fm},
\end{align*}
and proposed an $\epsilon$-differentially private method that adds Laplace noise with variance $2\left(\frac{\Delta^\mrm{dp-fm}}{\epsilon}\right)^2$ to each $\sum_\mathbb{D} \lambda_{n\phi}$ for all $\phi \in \Phi_j$ and for all $j \in \{0,\ldots,J\}$. In the following, we propose an improved functional mechanism that employs a tighter characterization of the sensitivities and can be extended to incorporate the $\cape$ protocol for the distributed settings.

\subsection{Improved Functional Mechanism}\label{sec:improved_fm}
\begin{algorithm}[t] 
	\caption{Improved Functional Mechanism \label{alg:fm}}
	\begin{algorithmic}[1]
    \Require Data samples $\{\vect{x}_{n}\}$; cost function $f_D(\vect{w})$ as in \eqref{eqn:f_D_with_Lambda_j}; privacy parameters $\epsilon$, $\delta$
    \For{$j = 0,\ 1,\ \ldots,\ J$}
    	\State Compute $\Lambda_j$ as shown in Section \ref{sec:improved_fm}
    	\State Compute $\Delta_j \gets \max_{\mathbb{D}, \mathbb{D}'} \|\Lambda_j^\mathbb{D} - \Lambda_j^{\mathbb{D}'}\|_F$
        \State Compute $\tau_j = \frac{\Delta_j}{\epsilon}\sqrt{2\log\frac{1.25}{\delta}}$
    	\State Compute $\hat{\Lambda}_j \gets \Lambda_j + e_j$, where $e_j$ have same dimensions as $\Lambda_j$ and entries of $e_j$ are i.i.d. $\sim \mathcal{N}(0, \tau_j^2)$
    \EndFor
    \State Compute $\hat{f}_D(\vect{w}) = \sum_{j=0}^J \inprod{\hat{\Lambda}_j}{\bar{\phi}_j}$\\
    \Return $\hat{f}_D(\vect{w})$
    \end{algorithmic}
\end{algorithm}
Our method is an improved version of the functional mechanism~\cite{zhang2012}. We use the Gaussian mechanism~\cite{dwork2013algorithmic} for computing the $(\epsilon, \delta)$-DP approximate of $f_D(\vect{w})$. This gives a weaker privacy guarantee than the original functional mechanism~\cite{zhang2012}, which used Laplace noise for $\epsilon$-DP. Our distributed function computation method (described in Section \ref{sec:capefm}) benefits from the fact that linear combinations of Gaussians are Gaussian. In other words, the proposed $\cape$ and the distributed functional mechanism rely on the Gaussianity of the noise. Now, instead of computing the $\mathcal{L}_2$-sensitivity of $\lambda_{n\phi}$, we define an \emph{array} $\Lambda_j$ that contains $\frac{1}{N}\sum_{n=1}^N \lambda_{n\phi}$ as its entries for all $\phi(\vect{w}) \in \Phi_j$. We used the term ``array'' instead of vector or matrix or tensor because the dimension of $\Lambda_j$ depends on the cardinality $|\Phi_j|$ of the set $\Phi_j$. 
We can represent $\Lambda_j$ as a scalar for $j = 0$ (because $\Phi_0 = \{1\}$), as a $D$-dimensional vector for $j = 1$ (because $\Phi_1 = \{w_d : d \in [D]\}$), and as a $D\times D$ matrix for $j = 2$ (because $\Phi_2 = \{w_{d_1} w_{d_2} : d_1, d_2 \in [D]\}$). Additionally, we use $\bar{\phi}_j$ to denote the array containing all $\phi(\vect{w}) \in \Phi_j$ as its entries. The arrays $\bar{\phi}_j$ and $\Lambda_j$ have the same dimensions and number of elements. Rewriting the objective, we observe
\begin{align}
f_D(\vect{w}) 	&= \sum_{j=0}^J \sum_{\phi \in \Phi_j}\left(\frac{1}{N} \sum_{n=1}^N \lambda_{n\phi}\right)\phi(\vect{w})= \sum_{j=0}^J \inprod{\Lambda_j}{\bar{\phi}_j}\label{eqn:f_D_with_Lambda_j}.
\end{align}
We define the $\mathcal{L}_2$-sensitivity of $\Lambda_j$ as
\begin{align}\label{eqn:sensitivity_Lambda}
\Delta_j &= \max_{\mathbb{D}, \mathbb{D}'} \|\Lambda_j^\mathbb{D} - \Lambda_j^{\mathbb{D}'}\|_F,
\end{align}
where $\Lambda_j^\mathbb{D}$ and $\Lambda_j^{\mathbb{D}'}$ are computed on neighboring datasets $\mathbb{D}$ and $\mathbb{D}'$, respectively. Now, we use the Gaussian mechanism to get an  $(\epsilon, \delta)$-DP approximation to $\Lambda_j$ as: $\hat{\Lambda}_j = \Lambda_j + e_j$, where $e_j$ is a noise array with the same dimensions as $\Lambda_j$ and its entries are drawn i.i.d. $\sim \mathcal{N}(0, \tau_j^2)$. Here $\tau_j = \frac{\Delta_j}{\epsilon}\sqrt{2\log\frac{1.25}{\delta}}$. As the function $f_D(\vect{w})$ depends on the data only through $\{\Lambda_j\}$, the computation
\begin{align}\label{eqn:hat_f_D_with_hat_Lambda_j}
\hat{f}_D(\vect{w}) &= \sum_{j=0}^J \inprod{\hat{\Lambda}_j}{\bar{\phi}_j}
\end{align}
satisfies $(\epsilon, \delta)$ differential privacy. We show the proposed improved functional mechanism in Algorithm \ref{alg:fm}.

\begin{theorem}\label{thm:fm}
Consider Algorithm \ref{alg:fm} with input parameters $(\epsilon, \delta)$ and the function $f_D(\vect{w})$ represented as \eqref{eqn:f_D_with_Lambda_j}. Then Algorithm \ref{alg:fm} computes an $(\epsilon, \delta)$ differentially private approximation $\hat{f}_D(\vect{w})$ to $f_D(\vect{w})$. Consequently, the minimizer $\hat{\vect{w}}^* = \argmin_\vect{w} \hat{f}_D(\vect{w})$ satisfies $(\epsilon, \delta)$-differential privacy.
\end{theorem}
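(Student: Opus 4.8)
The plan is to reduce the statement to two ingredients: the privacy of the joint noisy release $\{\hat{\Lambda}_j\}_{j=0}^J$ produced in the loop of Algorithm~\ref{alg:fm}, and the invariance of differential privacy under post-processing. The structural fact that makes this reduction possible is that, by the representation \eqref{eqn:f_D_with_Lambda_j}, the cost function $f_D(\vect{w})$ depends on the private data \emph{only} through the arrays $\{\Lambda_j\}$; consequently $\hat{f}_D(\vect{w}) = \sum_{j=0}^J \inprod{\hat{\Lambda}_j}{\bar{\phi}_j}$ and its minimizer $\hat{\vect{w}}^*$ are deterministic functions of $\{\hat{\Lambda}_j\}$ once the noise has been drawn. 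Thus it suffices to show that the release of $\{\hat{\Lambda}_j\}$ is $(\epsilon,\delta)$-differentially private, after which post-processing (exactly as invoked at the end of the proof of Theorem~\ref{thm:cape}) immediately gives the conclusion for both $\hat{f}_D$ and $\hat{\vect{w}}^*$.

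First I would handle each term separately. By the definition \eqref{eqn:sensitivity_Lambda}, $\Delta_j$ is precisely the $\mathcal{L}_2$ (Frobenius) sensitivity of the array $\Lambda_j$, and Algorithm~\ref{alg:fm} perturbs $\Lambda_j$ with i.i.d.\ entries drawn from $\mathcal{N}(0,\tau_j^2)$ where $\tau_j = \frac{\Delta_j}{\epsilon}\sqrt{2\log\frac{1.25}{\delta}}$. Vectorizing each array, this is exactly the Gaussian Mechanism calibrated to sensitivity $\Delta_j$, so each individual $\hat{\Lambda}_j$ is an $(\epsilon,\delta)$-DP release of $\Lambda_j$. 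To assemble these into a single guarantee for the entire tuple, I would stack the arrays into one vector $\Lambda = (\Lambda_0,\ldots,\Lambda_J)$ and view the perturbation as a single draw $\mathcal{N}(\vect{0},\Sigma)$ with block-diagonal covariance $\Sigma = \diag(\tau_0^2 \matr{I},\ldots,\tau_J^2 \matr{I})$. Applying the data-independent, invertible whitening map $\Sigma^{-1/2}$ (a reversible post-processing) reduces the release to an isotropic Gaussian mechanism, for which the governing quantity is the Mahalanobis sensitivity $\max_{\mathbb{D},\mathbb{D}'}\bigl(\sum_{j=0}^J \|\Lambda_j^{\mathbb{D}} - \Lambda_j^{\mathbb{D}'}\|_F^2 / \tau_j^2\bigr)^{1/2}$.

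The main obstacle is precisely controlling this combined sensitivity: a single changed data point perturbs the coefficients $\lambda_{n\phi}$ across \emph{all} degrees $j$ simultaneously, so the per-term contributions $\|\Lambda_j^{\mathbb{D}} - \Lambda_j^{\mathbb{D}'}\|_F^2/\tau_j^2$ accumulate rather than being governed by any single block. I would resolve this by exploiting the structure of the cost functions actually targeted in the next section: for the degree-two expansions arising in linear and logistic regression only a small number of arrays $\{\Lambda_j\}$ are data-dependent, and the constant term $\Lambda_0$ does not even enter the minimizer. Making precise which $\Lambda_j$ carry data dependence lets me either invoke a composition argument with the budget split across the active terms, or verify directly that the Mahalanobis sensitivity meets the threshold $\epsilon/\sqrt{2\log(1.25/\delta)}$ needed for a single $(\epsilon,\delta)$ guarantee.

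Finally, with the release $\{\hat{\Lambda}_j\}$ shown to be $(\epsilon,\delta)$-DP, I would close by observing that no further access to the private data occurs: $\hat{f}_D(\vect{w})$ is a fixed linear functional of the noisy arrays through the data-independent basis $\{\bar{\phi}_j\}$, and $\hat{\vect{w}}^* = \argmin_\vect{w} \hat{f}_D(\vect{w})$ is a deterministic function of $\hat{f}_D$. Post-processing invariance then transfers the $(\epsilon,\delta)$ guarantee from the release to both $\hat{f}_D(\vect{w})$ and $\hat{\vect{w}}^*$, completing the argument.
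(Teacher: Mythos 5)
Your overall architecture is the same as the paper's proof (Appendix G): the representation \eqref{eqn:f_D_with_Lambda_j} shows $f_D(\vect{w})$ touches the data only through $\{\Lambda_j\}$, each $\hat{\Lambda}_j$ is a Gaussian-mechanism release calibrated to the Frobenius sensitivity $\Delta_j$ of \eqref{eqn:sensitivity_Lambda}, and post-processing transfers the guarantee to $\hat{f}_D(\vect{w})$ and to $\hat{\vect{w}}^*$. Where you diverge is the middle step. The paper disposes of the joint release in a single sentence: it asserts that the ratio of the joint densities of $\{\hat{\Lambda}_j\}$ under $\mathbb{D}$ and $\mathbb{D}'$ is bounded by $\exp(\epsilon)$ with probability at least $1-\delta$, with no accounting for the fact that a single changed sample perturbs all the blocks simultaneously. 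You correctly identify this as the crux and propose to verify it via the whitened (Mahalanobis) sensitivity of the stacked release; this is a more honest treatment than the paper's.

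However, your sketch does not close that step, and the escape routes you propose do not rescue the theorem as stated. With $\tau_j = \frac{\Delta_j}{\epsilon}\sqrt{2\log\frac{1.25}{\delta}}$, each data-dependent block can contribute up to $\Delta_j^2/\tau_j^2 = \epsilon^2/\bigl(2\log\frac{1.25}{\delta}\bigr)$ to the squared Mahalanobis sensitivity, so with $m$ data-dependent blocks the whitened sensitivity reaches $\sqrt{m}$ times the threshold $\epsilon/\sqrt{2\log(1.25/\delta)}$ you need; the joint release is then only guaranteed at roughly $(\sqrt{m}\,\epsilon,\delta)$, not $(\epsilon,\delta)$. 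Your first route---exploiting that ``only a few $\Lambda_j$ are data-dependent''---does not help: in the paper's own linear-regression example, changing $(\vect{x}_N, y_N)$ moves all of $\Lambda_0$, $\Lambda_1$, $\Lambda_2$ at once (in logistic regression $\Lambda_0 = \log 2$ is constant, but $\Lambda_1$ and $\Lambda_2$ still move together), so $m \geq 2$ and the accumulation persists; and the observation that $\Lambda_0$ does not enter the minimizer is irrelevant, since the theorem claims privacy of $\hat{f}_D(\vect{w})$ itself, which Algorithm \ref{alg:fm} releases including $\hat{\Lambda}_0$. Your second route---splitting the budget across the active blocks---would make the privacy claim true but changes the calibration of $\tau_j$, i.e., proves a theorem about a modified algorithm. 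So what you have found is a genuine gap, but it is a gap shared by the paper: the one-line assertion about the joint density ratio in the paper's proof is exactly the unsupported step you flagged, and a complete argument along your lines yields either a degraded parameter $(\sqrt{m}\,\epsilon, \delta)$ for Algorithm \ref{alg:fm} as written, or the stated $(\epsilon,\delta)$ only after a per-block budget split.
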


\begin{proof} The proof is given in Appendix~\ref{appendix:thm:fm} in the Supplement.
\end{proof}

\subsection{Example -- Linear Regression}\label{sec:fm_example}
In this section, we demonstrate the proposed improved functional mechanism in the centralized setting using a linear regression problem. We show that Algorithm \ref{alg:fm} achieves much better utility at the price of a weaker privacy guarantee ($(\epsilon, \delta)$-DP vs $\epsilon$-DP). The main reason for this performance improvement is due to defining the sensitivities of $\Lambda_j$ separately for each $j$, instead of using an uniform conservative upper-bound $\Delta^\mrm{dp-fm}$ (as in~\cite{zhang2012}). We demonstrate the proposed improved functional mechanism for a logistic regression problem in Appendix~\ref{appendix:log_reg} in the Supplement.

We have a dataset $\mathbb{D}$ with $N$ samples of the form $(\vect{x}_n, y_n)$, where $\vect{x}_n \in \mathbb{R}^D$ is the feature vector and $y_n \in \mathbb{R}$ is the response. Without loss of generality, we assume that $\|\vect{x}_n\|_2 \leq 1$ and $y_n \in [-1, 1]$. We want to find a vector $\vect{w} \in \mathbb{R}^D$ such that $\vect{x}_n^\top\vect{w} \approx y_n$ for all $n\in [N]$. The cost function $f:\mathbb{R}^D \times \mathbb{R}^D \mapsto \mathbb{R}^+$ due to each sample and a particular $\vect{w}$ is the squared loss: $f(\vect{x}_n, \vect{w}) = \left(y_n - \vect{x}_n^\top\vect{w}\right)^2$. The empirical average cost function is defined as
\begin{align}\label{eqn:loss_avg}
f_D(\vect{w}) 	&= \frac{1}{N} \sum_{n=1}^N f(\vect{x}_n, \vect{w}) = \frac{1}{N} \left\|\vect{y} - \matr{X}\vect{w}\right\|_F^2,
\end{align}
where $\vect{y} \in \mathbb{R}^N$ contains all $y_n$ and $\matr{X} \in \mathbb{R}^{N\times D}$ contains $\vect{x}_n^\top$ as its rows. By simple algebra, it can be shown that
\begin{align*}
f_D(\vect{w}) &= \left(\frac{1}{N} \sum_{n=1}^N y_n^2\right) + \sum_{d=1}^D\left(-\frac{2}{N}\sum_{n=1}^N y_n x_{nd}\right)w_d \\
&\qquad + \sum_{d_1 = 1}^D\sum_{d_2 = 1}^D \left(\frac{1}{N}\sum_{n=1}^N x_{nd_1} x_{nd_2}\right)w_{d_1}w_{d_2},
\end{align*}
where $x_{pq}$ is the $q$-th element of the $p$-th sample vector or $x_{pq} = [\matr{X}]_{pq}$. Now, we have
\begin{align*}
\Lambda_0 &= \frac{1}{N} \sum_{n=1}^N y_n^2 = \frac{1}{N}\|\vect{y}\|_2^2\\
\Lambda_1 &= -\frac{2}{N}	\begin{bmatrix}
    							\sum_{n=1}^N y_n x_{n1} \\
                				\vdots \\
    							\sum_{n=1}^N y_n x_{nD}
  							\end{bmatrix} = -\frac{2}{N} \left(\vect{1}^\top \mrm{diag}(\vect{y}) \matr{X}\right)^\top\\
\Lambda_2 &= \frac{1}{N} 	\begin{bmatrix}
										\sum_{n=1}^N x_{n1}^2 & \cdots & \sum_{n=1}^N x_{n1}x_{nD}\\
                                			\vdots					& \ddots & \vdots \\
                                			\sum_{n=1}^N x_{nD}x_{n1} & \cdots & \sum_{n=1}^N x_{nD}^2
										\end{bmatrix} = \frac{\matr{X}^\top\matr{X}}{N},
\end{align*}
where $\vect{1}$ is an $N$-dimensional vector of all 1's and $\mrm{diag}(\vect{y})$ is an $N\times N$ diagonal matrix with the elements of $\vect{y}$ as diagonal entries. Additionally, we write out the $\{\bar{\phi}_j\}$ for completeness
\begin{align*}
\bar{\phi}_0 &= 1,\ 
\bar{\phi}_1 = 	\begin{bmatrix}
			w_1 \\
			\vdots \\
			w_D
			\end{bmatrix},\ \mbox{and} \\
\bar{\phi}_2 &= 	\begin{bmatrix}
				w_1^2 & w_1w_2 & \cdots & w_1w_D \\
                \vdots & \vdots & \ddots & \vdots \\
                w_Dw_1 & w_Dw_2 & \cdots & w_D^2
			\end{bmatrix}.
\end{align*}
Therefore, we can express $f_D(\vect{w})$ as $f_D(\vect{w}) = \sum_{j=0}^2 \inprod{\Lambda_j}{\bar{\phi}_j}$. Now, we focus on finding the sensitivities of $\{\Lambda_j\}$. Let us consider a neighboring dataset $\mathbb{D}'$ which contains the same tuples as $\mathbb{D}$, except for the last one, i.e. $(\vect{x}_N', y_N')$. We have
\begin{align*}
\left|\Lambda_0^\mathbb{D} - \Lambda_0^{\mathbb{D}'}\right| &= \frac{1}{N} \left(y_N^2 - {y_N'}^2\right) \leq \frac{1}{N} \triangleq \Delta_0,
\end{align*}
where the inequality follows from the fact that $y_n \in [-1, 1]$. Next, we observe
\begin{align*}
\left\|\Lambda_1^\mathbb{D} - \Lambda_1^{\mathbb{D}'}\right\|_2 &= \frac{2}{N} \|\vect{a} - \vect{a}'\|_2 \leq  \frac{4}{N} \triangleq \Delta_1,
\end{align*}
where $\vect{a} = y_N\vect{x}_N$ and $\vect{a}' = y_N' \vect{x}_N'$. Here, we used the inequality $\|\vect{a}\|_2 \leq 1$ as $-1 \leq y_n \leq 1$, $\|\vect{x}_n\|_2 \leq 1$. Similarly, $\|\vect{a}'\|_2 \leq 1$. Finally, we observe
\begin{align*}
\left\|\Lambda_2^\mathbb{D} - \Lambda_2^{\mathbb{D}'}\right\|_F &= \frac{1}{N}\left\|\matr{X}^\top\matr{X} - {\matr{X}'}^\top\matr{X}'\right\|_F \\
							&= \frac{1}{N}\left\|\vect{x}_N\vect{x}_N^\top - \vect{x}_N'{\vect{x}_N'}^\top\right\|_F\\
                            &\leq \frac{\sqrt{2}}{N} \triangleq \Delta_2,
\end{align*}
where the last inequality follows from realizing that the $D\times D$ symmetric matrix $\vect{x}_N\vect{x}_N^\top - \vect{x}_N'{\vect{x}_N'}^\top$ is at most rank-2. Therefore, we can write its eigen-decomposition as:
\begin{align*}
\left\|\vect{x}_N\vect{x}_N^\top - \vect{x}_N'{\vect{x}_N'}^\top\right\|_F &= \left\|\matr{U}\matr{\Sigma}\matr{U}^\top\right\|_F \\
					&\leq \|\matr{U}\|_F^2\|\matr{\Sigma}\|_F = \|\matr{\Sigma}\|_F \leq \sqrt{2}.
\end{align*}
Now that we have computed the $\mathcal{L}_2$-sensitivities of $\{\Lambda_j\}$, we can compute $\{\hat{\Lambda}_j\}$ and thus, the $(\epsilon, \delta)$-DP approximation $\hat{f}_D(\vect{w})$ following Algorithm \ref{alg:fm}. Note that if we employed the sensitivity computation technique as in~\cite{zhang2012}, the sensitivity for each entry of $\Lambda_j$ would be $\Delta^\mrm{dp-fm} = \frac{2(D+1)^2}{N}$, which is orders of magnitude larger than $\Delta_j$ for any meaningful $D > 1$ and for all $j \in \{0,\ldots,J\}$. Therefore, with the sensitivity computation proposed in this work, we can achieve $\hat{f}_D(\vect{w})$ with much less noise. This results in a more accurate privacy-preserving estimate of the optimal $\hat{\vect{w}}^* = \argmin_\vect{w}\hat{f}_D(\vect{w})$, which we demonstrate empirically in Section \ref{sec:experimental_results}.

\subsection{Distributed Functional Mechanism}\label{sec:convdistfm}
In this section, we first we describe a conventional approach. Inline with Section \ref{sec:dist_avg}, we show that the conventional approach would always result in sub-optimal performance due to larger noise added to the cost function. Then we present the proposed algorithm ($\capefm$) in detail (see Algorithm \ref{alg:capefm}).

\noindent\textbf{Conventional Approach. }Recall the distributed setting, where we have $S$ different sites and a central aggregator, as depicted in Figure \ref{fig:network_structure}. The aggregator is not trusted. The goal is to compute the differentially private minimizer $\hat{\vect{w}}^*$ in this distributed setting. We assume that the datasets in the sites ($\mathbb{D}_s$ for $s \in [S]$) are disjoint and $N_s = \frac{N}{S}$. Recall that the cost function $f_D(\vect{w})$ depends on the data only through the $\{\Lambda_j\}$ for $j \in \{0,\ldots,J\}$. One na\"ive way to compute $\hat{\vect{w}}^*$ at the aggregator is the following: the sites use local data to compute and send the $(\epsilon, \delta)$ differentially-private approximates of $\{\Lambda_j^s\}$, denoted $\{\hat{\Lambda}_j^s\}$, to the aggregator. The aggregator combines these quantities to compute $\{\hat{\Lambda}_j\}$ and hence $\hat{f}_D(\vect{w})$. The aggregator can then compute and release $\hat{\vect{w}}^* = \argmin_\vect{w} \hat{f}_D(\vect{w})$. More specifically, each site $s$ computes $\hat{\Lambda}_j^s = \Lambda_j^s + e_j^s$, where $e_j^s$ is an array with the same dimensions as $\Lambda_j^s$ and the entries are drawn i.i.d. $\sim \mathcal{N}(0, {\tau_j^s}^2)$. Here,
\begin{align*}
\tau_j^s &= \frac{\Delta_j^s}{\epsilon}\sqrt{2\log\frac{1.25}{\delta}},
\end{align*}
and $\Delta_j^s = \max_{\mathbb{D}_s, \mathbb{D}_s'} \|\Lambda_j^{s,\mathbb{D}_s} - \Lambda_j^{s,\mathbb{D}_s'}\|_F$, where $\Lambda_j^{s,\mathbb{D}_s}$ and $\Lambda_j^{s,\mathbb{D}'_s}$ are computed on neighboring datasets $\mathbb{D}_s$ and $\mathbb{D}_s'$, respectively. Recall that $\Lambda_j^{s,\mathbb{D}_s}$ (and similarly $\Lambda_j^{s,\mathbb{D}'_s}$) is an array of $|\Phi_j|$ elements containing $\frac{1}{N_s}\sum_{n=1}^{N_s}\lambda_{n\phi}$ as its entries for all $\phi(\vect{w}) \in \Phi_j$. The sites send these $\{\hat{\Lambda}_j^s\}$ to the aggregator. The aggregator then computes
\begin{align*}
\hat{\Lambda}_j^\mrm{conv} &= \frac{1}{S} \sum_{s=1}^S \hat{\Lambda}_j^s = \frac{1}{S} \sum_{s=1}^S \Lambda_j^s + \frac{1}{S} \sum_{s=1}^S e_j^s.
\end{align*}
The variance of the estimator $\hat{\Lambda}_j^\mrm{conv}$ is $S\cdot \frac{{\tau_j^s}^2}{S^2} = \frac{{\tau_j^s}^2}{S} \triangleq {\tau_j^\mrm{conv}}^2$. However, if we had all the data samples at the aggregator (centralized/pooled data scenario), we could compute the $(\epsilon, \delta)$-DP approximates of $\{\Lambda_j\}$ as $\hat{\Lambda}_j = \Lambda_j + e_j$, where $e_j$ is an array with the same dimensions as $\Lambda_j$ and the entries are drawn i.i.d. $\sim \mathcal{N}(0, {\tau_j^\mrm{pool}}^2)$. The noise standard deviation $\tau_j^\mrm{pool}$ is given by:
\begin{align*}
\tau_j^\mrm{pool} &= \frac{\Delta_j^\mrm{pool}}{\epsilon}\sqrt{2\log\frac{1.25}{\delta}},
\end{align*}
where $\Delta_j^\mrm{pool} = \max_{\mathbb{D}, \mathbb{D}'} \|\Lambda_j^\mathbb{D} - \Lambda_j^{\mathbb{D}'}\|_F$. Now, $\Lambda_j^\mathbb{D}$ is an $|\Phi_j|$-dimensional array with entries $\frac{1}{N}\sum_{n=1}^{N}\lambda_{n\phi}$ for all $\phi(\vect{w}) \in \Phi_j$. Clearly $\frac{\Delta_j^\mrm{pool}}{\Delta_j^s} = \frac{1/N}{1/N_s} = \frac{1}{S}$, which implies $\tau_j^\mrm{pool} = \frac{\tau_j^s}{S}$. For this case, we observe the ratio $\frac{{\tau_j^\mrm{pool}}^2}{{\tau_j^\mrm{conv}}^2} = \frac{{\tau_j^s}^2 / S^2}{{\tau_j^s}^2 / S} = \frac{1}{S}$, which is exactly the same as the distributed problem of Section \ref{sec:dist_avg}. 

\subsection{Proposed Distributed Functional Mechanism $(\capefm)$}\label{sec:capefm}
\begin{algorithm}[t] 
	\caption{Proposed Distributed Functional Mechanism ($\capefm$)\label{alg:capefm}}
	\begin{algorithmic}[1]
    \Require Data samples $\{\vect{x}_{s,n}\}$; cost function $f_D(\vect{w})$ as in \eqref{eqn:f_D_with_Lambda_j}; local noise variances $\{\tau_j^s\}$ for all $j \in \{0,\ldots,J\}$
    \For{$s = 1,\ \ldots,\ S$}
    	\For{$j = 0,\ 1,\ \ldots,\ J$}
        	\State Compute $\Lambda_j$ as described in Section \ref{sec:improved_fm}
        	\State Generate $e_j^s$ according to Algorithm \ref{alg:zero-sum-noise-generation} (entrywise)
        	\State Compute ${\tau_{jg}^s}^2 \gets \frac{{\tau_j^s}^2}{S}$
         \State Generate $g_j^s$ with entries i.i.d. $\sim \mathcal{N}(0, {\tau_{jg}^s}^2)$
    		\State Compute $\hat{\Lambda}_j^s \gets \Lambda_j^s + e_j^s + g_j^s$ \label{alg:capefm:step_lambda_hat}
    	\EndFor
    \EndFor
    \State At the central aggregator, compute for all $j \in \{0,\ldots,J\}$: $\hat{\Lambda}_j \gets \frac{1}{S} \sum_{s=1}^S \hat{\Lambda}_j^s$ \label{alg:capefm:step_lambda_ag}
    \State Compute $\hat{f}_D(\vect{w}) \gets \sum_{j=0}^J \inprod{\hat{\Lambda}_j}{\bar{\phi}_j}$\\
    \Return $\hat{f}_D(\vect{w})$
    \end{algorithmic}
\end{algorithm}
Our proposed method, $\capefm$, is described in Algorithm \ref{alg:capefm} and is based on the $\cape$ scheme described in Section \ref{sec:cape_details}. We exploit the correlated noise to achieve the same performance of the pooled data case (Lemma \ref{lemma:cape}) in the symmetric decentralized setting under the honest-but-curious model for the sites. Recall our assumption that all parties follow the protocol and the number of colluding sites is not more than $\ceil*{S/3} - 1$. The sites collectively generate the noise $e_j^s$ with entries i.i.d. $\sim \mathcal{N}(0, {\tau_{je}^s}^2)$, according to Algorithm \ref{alg:zero-sum-noise-generation}, such that $\sum_{s=1}^S e_j^s = 0$ holds for all $j \in \{0,\ldots,J\}$. The local sites also generate noise $g_j^s$ with entries i.i.d. $\sim \mathcal{N}(0, {\tau_{jg}^s}^2)$. From each site $s$, we release (or send to the central aggregator): $\hat{\Lambda}_j^s = \Lambda_j^s + e_j^s +  g_j^s$ for all $j \in \{0,\ldots,J\}$. Note that $e_j^s$ and $g_j^s$ are arrays of the same dimension as $\Lambda_j^s$. As described in Section \ref{sec:cape_details}, we have
\begin{align}\label{eqn:capefm_noise_variance}
{\tau_{je}^s}^2 &= \left(1-\frac{1}{S}\right){\tau_j^s}^2,\mbox{ and } {\tau_{jg}^s}^2 = \frac{{\tau_j^s}^2}{S}.
\end{align}
Now, at the aggregator we compute the following quantity
\begin{align*}
\hat{\Lambda}_j &= \frac{1}{S} \sum_{s=1}^S \hat{\Lambda}_j^s = \frac{1}{S} \sum_{s=1}^S \Lambda_j^s + \frac{1}{S} \sum_{s=1}^S g_j^s,
\end{align*}
because $\sum_s e_j^s = 0$. The aggregator then uses these $\{\hat{\Lambda}_j\}$ to compute $\hat{f}_D(\vect{w})$ and release $\hat{\vect{w}}^* = \argmin_\vect{w}\hat{f}_D(\vect{w})$. Privacy of $\capefm$ follows directly from Theorem \ref{thm:cape}. In the symmetric setting (i.e., $N_s = \frac{N}{S}$ and $\tau_j^s = \tau_j$ for all sites $s \in [S]$ and all $j \in \{0, 1, \ldots, J\}$), the noise variance at the aggregator is exactly the same as that of the pooled data scenario (see Lemma \ref{lemma:cape} and Proposition~\ref{prop:low_sensitivity}). Additionally, the performance gain of $\capefm$ over any conventional distributed functional mechanism is given by Proposition~\ref{prop:gain}.

\section{Experimental Results}\label{sec:experimental_results}
In this section, we empirically show the effectiveness of the proposed $\cape$ and $\capefm$ algorithms with applications in a neural network based classifier and a linear regression problem in distributed settings. Our $\cape$ algorithm can improve a distributed computation if the target function has sensitivity satisfying the conditions of Proposition \ref{prop:low_sensitivity}. Additionally, the proposed $\capefm$ algorithm is well-suited for distributed optimization, as we can compute the DP approximate $\hat{f}_D(\vect{w})$ of the loss function and then use any off-the-shelf optimizer. If the function can be represented as \eqref{eqn:f_D_with_Lambda_j}, we can employ the $\capefm$ algorithm for an even better utility. However, finding such a representation may be challenging for large neural networks. We first demonstrate the effectiveness of the $\cape$ algorithm for a neural network based classifier with varying $\epsilon$ and total sample size by comparing against the non-privacy-preserving pooled-data approach $(\nonprivT)$ and the conventional approach ($\conv$) of distributed DP gradient descent~\cite{abadi2016}. Then we demonstrate the effectiveness of the $\capefm$ algorithm using a distributed linear regression problem. We present experimental results to show empirical comparison of performance of the proposed $\capefm$ with the existing $\dpfm$~\cite{zhang2012}, objective perturbation $(\objpert)$~\cite{anand2011} and non-private linear regression $(\nonpriv)$ on pooled-data. Note that both $\dpfm$ and $\objpert$ offer the stronger $\epsilon$-DP and are applied to the pooled-data scenario. We also included the performance variation of a conventional DP distributed scheme with no correlated noise $(\conv)$ and a DP linear regression on local (single site) data $(\local)$. For both the neural network based classifier and the linear regression problem, we consider the symmetric setting (i.e., $N_s = \frac{N}{S}$ and $\tau_s = \tau$) and show the average performance over 10 independent runs.

\subsection{Distributed Neural Network-based Classifier}\label{sec:exp_dist_nn}
\begin{figure*}[t]
  \centering
  \includegraphics[width=1\textwidth]{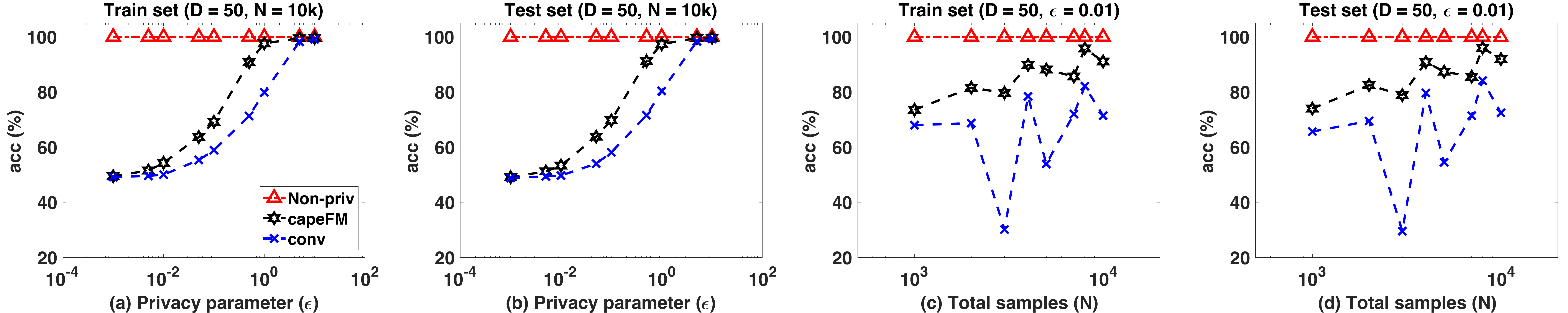}\\
  \vspace{-0.0in}
  \caption{Variation of performance of NN based classifier on synthetic data: (a)--(b) with $\epsilon$ per iteration; (c)--(d) with $N$. Fixed parameter: $S = 4$.}
  \label{fig:all_vs_nn}
\end{figure*}

We evaluate the performance of the proposed $\cape$ scheme on a neural network based classifier to classify between the two classes on a synthetic dataset. The samples from the two classes are random Gaussian vectors with unit variance and the means are separated by 1. We have the same distributed setup as mentioned before: data samples are distributed across $S$ sites. Let $\matr{X}_s \in \mathbb{R}^{D \times N_s}$ be the sample matrix at site $s$ with $N_s$ samples in $D$ dimensions. Let $\vect{y}_s \in \{0, 1\}^{N_s}$ contain the labels for each sample. The global sample matrix and labels vector are given as $\matr{X} = \left[\matr{X}_1, \ldots, \matr{X}_S\right] \in \mathbb{R}^{D\times N}$ and $\vect{y} = \left[\vect{y}_1^\top, \ldots, \vect{y}_S^\top\right]^\top \in \mathbb{R}^N$, where $N = \sum_{s=1}^S N_s$. We consider an $L = 2$ layer neural network and $D=50$, $N = 10k$, $S = 4$. The number of units in layer $l \in \{0, 1, \ldots, L\}$ is denoted by $D^{[l]}$ with $D^{[0]} = D$. We use Rectified Linear Unit (ReLU) activation in the hidden layer and sigmoid activation in the output layer. Our goal is to find the following parameters of the neural network: $\matr{W}^{[1]} \in \mathbb{R}^{D^{[1]} \times D^{[0]}}$, $\vect{b}^{[1]} \in \mathbb{R}^{D^{[1]}}$, $\matr{W}^{[2]} \in \mathbb{R}^{D^{[2]} \times D^{[1]}}$ and $\vect{b}^{[2]}\in \mathbb{R}^{D^{[2]}}$, such that $\texttt{round}\left(\hat{\vect{y}}\right)$ gives the labels of the samples with minimum deviation from $\vect{y}$, where $\hat{\vect{y}} = \texttt{sigmoid}\left(\matr{W}^{[2]}\matr{A}^{[1]} + \vect{b}^{[2]}\right) \mbox{ and } \matr{A}^{[1]} = \texttt{ReLU}\left(\matr{W}^{[1]}\matr{X} + \vect{b}^{[1]}\right)$. We use distributed gradient descent to minimize the empirical average cross-entropy loss. The cross-entropy loss for the $n$-th sample $\vect{x}_n$ is: 
\begin{align}\label{eqn:xentropy_loss}
l(\hat{y}_n, y_n) &= -y_n\log \hat{y}_n - (1-y_n)\log(1-\hat{y}_n).
\end{align}
Each site evaluates the gradients on the local data and sends privacy-preserving approximates of the gradients to the aggregator according to a $\nonprivT$, $\cape$ or $\conv$ schemes. The aggregator then combines the gradient contributions and updates the parameters. For $\cape$, we set $\tau_s = \frac{\Delta}{\epsilon}\sqrt{2\log \frac{1.25}{0.01}}$ for all experiments, where $\Delta$ is the $\mathcal{L}_2$ sensitivity of the corresponding gradient. To measure the utility of the estimated parameters, we use the percent accuracy $\mathrm{acc} = \frac{1}{N_{test}} \sum_{n = 1}^{N_{test}} \mathcal{I}\left(\texttt{round}(\hat{\vect{y}}_{test, n}) = \vect{y}_{test, n}\right) \times 100$, where $\mathcal{I}(\cdot)$ is the indicator function and $\texttt{round}\left(\vect{y}_{test}\right) \in \mathbb{R}^{N_{test}}$ contains the labels of the test-set samples.

\noindent\textbf{Performance Variation with $\epsilon$ and $N$. }We consider the privacy-utility tradeoff first. In Figure \ref{fig:all_vs_nn}(a)--(b), we show the variation of $\mathrm{acc}$ with $\epsilon$ per iteration on the train and test sets for the synthetic dataset, while keeping $N$ and $S$ fixed. We observe that both of the privacy preserving algorithms: $\cape$ and $\conv$, perform better as we increase $\epsilon$. The proposed $\cape$ algorithm performs better than $\conv$ and reaches the performance of $\nonprivT$ even for moderate $\epsilon$ values. Next, we consider the variation of $\mathrm{acc}$ with total sample size $N$, while keeping $\epsilon$ per iteration and $S$ fixed. In Figure \ref{fig:all_vs_nn}(c)--(d), we show the variation of $\mathrm{acc}$ on the train and test sets. As in the case of varying $\epsilon$, we observe that increasing $N$ (hence increasing $N_s$) improves $\mathrm{acc}$ for the privacy-preserving algorithms. We recall that the variance of the noise added for privacy is inversely proportional to $N^2$ -- therefore, availability of more samples results in better accuracy for the same privacy level. Again, we observe that the proposed $\cape$ algorithm outperforms the $\conv$ algorithm. However, when $N$ is quite small, the performance gap between $\cape$ and $\conv$ is less pronounced. In practice, we observe that the gradient descent converges much faster under $\cape$ than the $\conv$, matching the performance of $\nonprivT$ This can be explained by recalling the fact that the $\cape$ algorithm offers the benefit of much less additive noise at the aggregator.

\subsection{Distributed Linear Regression}\label{sec:exp_dist_lr}
\begin{figure*}[t]
  \centering
  \includegraphics[width=1\textwidth]{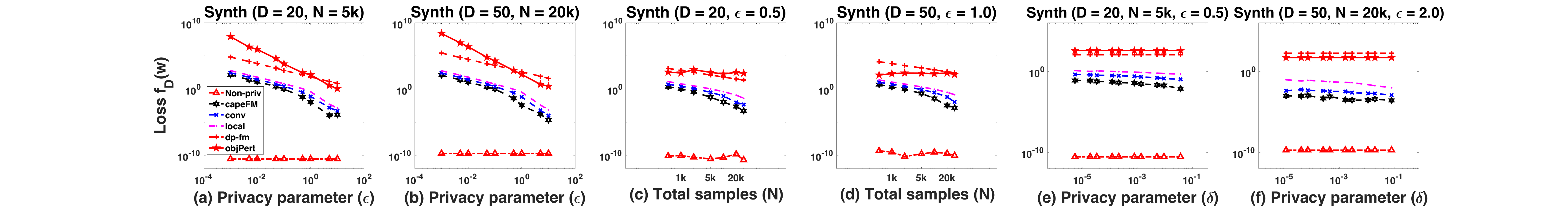}\\
  \vspace{-0.0in}
  \caption{Variation of loss $f_D(\vect{w})$ at $\vect{w}^*$ for synthetic datasets. (a)--(b): with $\epsilon$. (c)--(d): with total samples $N$. (e)--(f): with $\delta$. Fixed param.: $S = 5$.}
  \vspace{-0.0in}
  \label{fig:loss_vs_all_synth}
\end{figure*}

\begin{figure*}[t]
  \centering
  \includegraphics[width=1\textwidth]{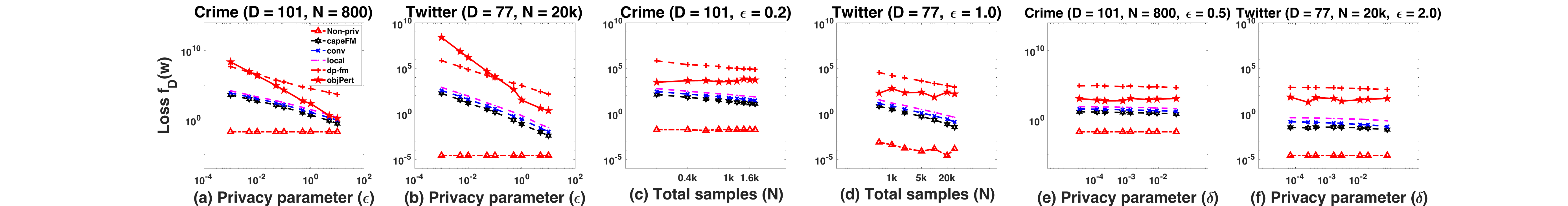}\\
  \vspace{-0.0in}
  \caption{Variation of loss $f_D(\vect{w})$ at $\vect{w}^*$ for two real datasets. (a)--(b): with $\epsilon$. (c)--(d): with total samples $N$. (e)--(f): with $\delta$. Fixed param.: $S = 5$.}
  \vspace{-0.1in}
  \label{fig:loss_vs_all_real}
\end{figure*}

To demonstrate the effectiveness of $\capefm$, we performed experiments on three datasets: a \textit{synthetic} dataset ($D = 20$ or $D = 50$) (Synth) generated with a random $\vect{w}^*$ and random samples $\matr{X}$, the \textit{Communities and Crime} dataset ($D = 101$)~\cite{Lichman:2013} (Crime) and the \textit{Buzz in social media} dataset ($D = 77$)~\cite{Lichman:2013} (Twitter). We refer the reader to~\cite{Lichman:2013} for a detailed description of these real datasets. Now, for each of the datasets, we normalized each feature across the samples to ensure that each feature lies in the range $[-1, 1]$. We also normalized the samples with the maximum $\mathcal{L}_2$ norm in each dataset to ensure $\|\vect{x}_n\|_2 \le 1 \ \ \forall n$. This preprocessing step is not differentially private but can be modified to satisfy privacy at the cost of some utility. For $\capefm$, we set $\tau_j^s = \frac{\Delta_j^s}{\epsilon}\sqrt{2\log \frac{1.25}{10^{-5}}}$ for experiments on the synthetic datasets and $\tau_j^s = \frac{\Delta_j^s}{\epsilon}\sqrt{2\log \frac{1.25}{10^{-3}}}$ for experiments on the real datasets, where $\Delta_j^s$ is the $\mathcal{L}_2$ sensitivity of the corresponding $\Lambda_j$. We use two performance indices for the synthetic dataset. The first one is the empirical loss 
\begin{align}
f_D(\vect{w}) &= \frac{1}{N}\|\vect{y} - \matr{X}\vect{w}\|_2^2,
\end{align}
where $\matr{X}$ and $\vect{y}$ contains all data tuples $(\matr{X}_s, \vect{y}_s)$ from $s \in [S]$. We compute $f_D(\vect{w})$ at the final $\vect{w}$, which is found from minimizing the output function of the algorithm. The second one is closeness to the true $\vect{w}^*$. If the final $\vect{w}$ achieved by minimizing the output function of the algorithm is denoted by $\hat{\vect{w}}^*$ then we define: $\mrm{err}_w = \frac{1}{D} \|\vect{w}^* - \hat{\vect{w}}^*\|_2$. As the true $\vect{w}^*$ is unknown for the real datasets, we only use $f_D(\vect{w})$ as the performance index for Crime and Twitter datasets. The variation of $\mrm{err}_w$ with several parameters on synthetic datasets is shown in Figure~\ref{fig:err_vs_all_synth} in Appendix~\ref{appendix:err_vs_all_synth} in the Supplement.

\noindent\textbf{Dependence on Privacy Parameter $\epsilon$. }First, we explore the trade-off between privacy and utility. We note that, as we employ the Gaussian mechanism, the standard deviation of the added noise is inversely proportional to $\epsilon$ -- bigger $\epsilon$ means higher privacy risk but less noise and thus, better utility. We observe this in our experiments as well. In Figure \ref{fig:loss_vs_all_synth}(a)--(b), we show the variation of $f_D(\vect{w})$ of different algorithms for different values of $\epsilon$ on synthetic data. For this experiment, we kept the number of total samples $N$ and the number of sites $S$ fixed. We show the plots for two different feature dimensions: $D=20$ and $D=50$. For both of the synthetic datasets, we observe that as $\epsilon$ increases (higher privacy risk), the loss $f_D(\vect{w})$ decreases. The proposed $\capefm$ reaches very small $f_D(\vect{w})$ for some parameter choices and clearly outperforms the $\dpfm$, $\objpert$, $\conv$ and $\local$. One of the reasons that $\capefm$ outperforms $\conv$ is the smaller noise variance at the aggregator that we can achieve due to the correlated noise scheme. Moreover, $\capefm$ outperforms $\dpfm$ because $\dpfm$ suffers from a much larger variance at the aggregator (due to the conservative sensitivity computation of $\lambda_{n\phi}$). On the other hand, $\objpert$ also entails addition of noise with large variance as the sensitivity of the optimal $\vect{w}^*$ is large (to be exact, the sensitivity is 2). Achieving better performance than $\local$ is intuitive because including the information from multiple sites to estimate a population parameter always results in better performance than using the data from a single site only. Additionally, we observe that for datasets with lower dimensional samples, we can use smaller $\epsilon$ (i.e., to guarantee lower privacy risk) for the same utility. In Figure \ref{fig:loss_vs_all_real}(a)--(b), we show the variation of $f_D(\vect{w})$ with $\epsilon$ for the Crime and the Twitter datasets. We observe similar variation characteristic of $f_D(\vect{w})$ for the real datasets as we observed for the synthetic datasets. Note that, for real datasets, we chose larger $\tau_s$ values than synthetic datasets to achieve the similar utility. 

\noindent\textbf{Dependence on Total Sample Size $N$. }Next, we investigate the variation in performance with the total sample size $N$. Intuitively, it should be easier to guarantee smaller privacy risk $\epsilon$ and higher utility, when $N$ is large. Figure \ref{fig:loss_vs_all_synth}(c)--(d) show how $f_D(\vect{w})$ decreases as a function of $N$ on synthetic data. The variation with $N$ reinforces the results seen earlier with variation of $\epsilon$. For a fixed $\epsilon$ and $S$, the $f_D(\vect{w})$ decreases as we increase $N$. For sufficiently large $N$ and $\epsilon$, $f_D(\vect{w})$ will reach that of the non-private pooled case $(\nonpriv)$. We observe a sharper decrease in $f_D(\vect{w})$ for lower-dimensional datasets. In Figure \ref{fig:loss_vs_all_real}(c)--(d), we show the variation of $f_D(\vect{w})$ with $N$ for the Crime and the Twitter datasets. Again, we observe similar variation of $f_D(\vect{w})$ for the real datasets as we observed for the synthetic datasets. 

\noindent\textbf{Dependence on Privacy Parameter $\delta$. }Note that, the proposed $\capefm$ algorithm guarantees $(\epsilon, \delta)$ differential privacy where $(\epsilon, \delta)$ satisfy the relation $\delta = 2\frac{\sigma_z}{\epsilon - \mu_z}\phi\left(\frac{\epsilon - \mu_z}{\sigma_z}\right)$. Recall that $\delta$ can be considered as the probability that the algorithm releases the private information without guaranteeing privacy. Therefore, we want this to be as small as possible. However, smaller $\delta$ also dictates larger noise variance. We explore the variation of performance with different $\delta$ with a fixed number of colluding sites $S_C = \ceil*{\frac{S}{3}} - 1$. In Figure \ref{fig:loss_vs_all_synth}(e)--(f), we show how $f_D(\vect{w})$ varies with varying $\delta$ on synthetic data. We observe that if $N$ and $\delta$ are too small, the proposed $\capefm$, $\conv$ and $\local$ algorithms perform poorly. However, our $\capefm$ algorithm can achieve very good utility for moderate $N$ and $\delta$ values, easily outperforming the other DP algorithms. We observe in Figure \ref{fig:loss_vs_all_real}(e)--(f) similar characteristic for the two real datasets, although the variation of $f_D(\vect{w})$ is not as pronounced as for the synthetic datasets. Recall that the $\dpfm$ and the $\objpert$ algorithms offer pure $\epsilon$-DP guarantee and, therefore, do not vary with $\delta$.

\section{Conclusions}\label{sec:conclusion}
This paper proposes a novel protocol, $\cape$, for distributed differentially private computations. $\cape$ is best suited for applications in which private data must be held locally, e.g. in health care research with legal and ethical limitations on the degree of sharing the ``raw'' data. $\cape$ can greatly improve the privacy-utility tradeoff when (a) all parties follow the protocol and (b) the number of colluding sites is not more than $\ceil*{S/3} - 1$. Our proposed $\cape$ protocol is based on an estimation-theoretic analysis of the noise addition process for differential privacy and therefore, provides different guarantees than cryptographic approaches such as SMC. In addition to $\cape$, we proposed a new algorithm for differentially private computation of functions in distributed settings. Our $\capefm$ algorithm can be employed to compute any continuous and differentiable function. As mentioned before, approximation of the empirical average cost is required in many distributed optimization problems. We proposed an improved functional mechanism with a new way to compute the associated sensitivities. We analytically showed that the proposed approach for computing the sensitivities offers much less additive noise for two common regression problems -- linear regression and logistic regression. Our proposed $\capefm$ can approximate the privacy-preserving empirical average cost such that we can achieve the same utility level as the pooled data scenario in certain regimes. We achieve this by employing the $\cape$ protocol. We empirically compared the performance of the proposed algorithms with those of existing and conventional algorithms for a neural-network based classification problem and a distributed linear regression problem. We varied privacy parameters and relevant dataset (synthetic and real) parameters and showed that the proposed algorithms outperformed the existing and conventional algorithms comfortably -- matching the performance of the non-private algorithm for some parameter choices. In general, the proposed algorithms offered very good utility indicating that meaningful privacy can be attained without losing much performance by the virtue of algorithm design. A very interesting future work could be to extend the $\cape$ framework to fit the optimal Staircase Mechanism~\cite{geng2015} for differential privacy. Another possible direction is to extend $\cape$ to be employable in arbitrary tree-structured networks.

\section*{Acknowledgements}
The work of the authors was supported by the the NIH under award 1R01DA040487-01A1, the NSF under award CCF-1453432, and by DARPA and SSC Pacific under contract No. N66001-15-C-4070.

\bibliography{refs.bib}
\bibliographystyle{IEEEtran}

\clearpage

\section*{Appendix}
\appendix
\section{Communication Overhead} \label{appendix:cape_comm}
The conventional $D$-dimensional averaging needs only one message from each site, thus $SD$ or $\Theta(SD)$ is the communication complexity. Our $\cape$ scheme employs the $\secureagg$ protocol to compute the zero-sum noise. The $\secureagg$ protocol~\cite{Bonawitz17} entails an $O(S + D)$ overhead for each site and $O(S^2 + SD)$ for the server/aggregator. The rest of our scheme requires $\Theta(D)$ and $\Theta(SD)$ communication overheads for the sites and  the aggregator, respectively. On the other hand, the scheme proposed in~\cite{Mikko17} has a communication cost proportional to $(S+1)DM$ or $\Theta(SDM)$, where $M$ is the number of compute nodes. Goryczka et al.~\cite{SlawomirMPC17} compared several secret sharing, homomorphic encryption and perturbation-based secure sum aggregation and showed their communication complexities. Except for the secret sharing approach (which requires $O(S^2)$ overhead), the other approaches are $O(S)$ in communication complexity. A comparison of communication overhead for different algorithms are shown in Table \ref{tbl:commn_cost}.

\begin{table}[t]
\caption{Comparison of communication overhead}
\label{tbl:commn_cost}
\centering
\begin{tabular}{lll}
  \toprule
  \textbf{Algorithm} & \textbf{Site} & \textbf{Aggregator}\\
  \midrule
  $\cape$ & $O(S + D)$ & $O(S^2 + SD)$ \\
  Heikkil\"{a} et al.~\cite{Mikko17} & $\Theta(DM)$ & $\Theta(SDM)$\\
  Bonawitz et al.~\cite{Bonawitz17} & $O(S + D)$ & $O(S^2 + SD)$\\
  \bottomrule
\end{tabular}
\end{table}

\section{Empirical Comparison of $\delta$ and $\delta_\mrm{conv}$}\label{appendix:eff_delta}
\begin{figure}[t]
  \centering
  \includegraphics[width=1\columnwidth]{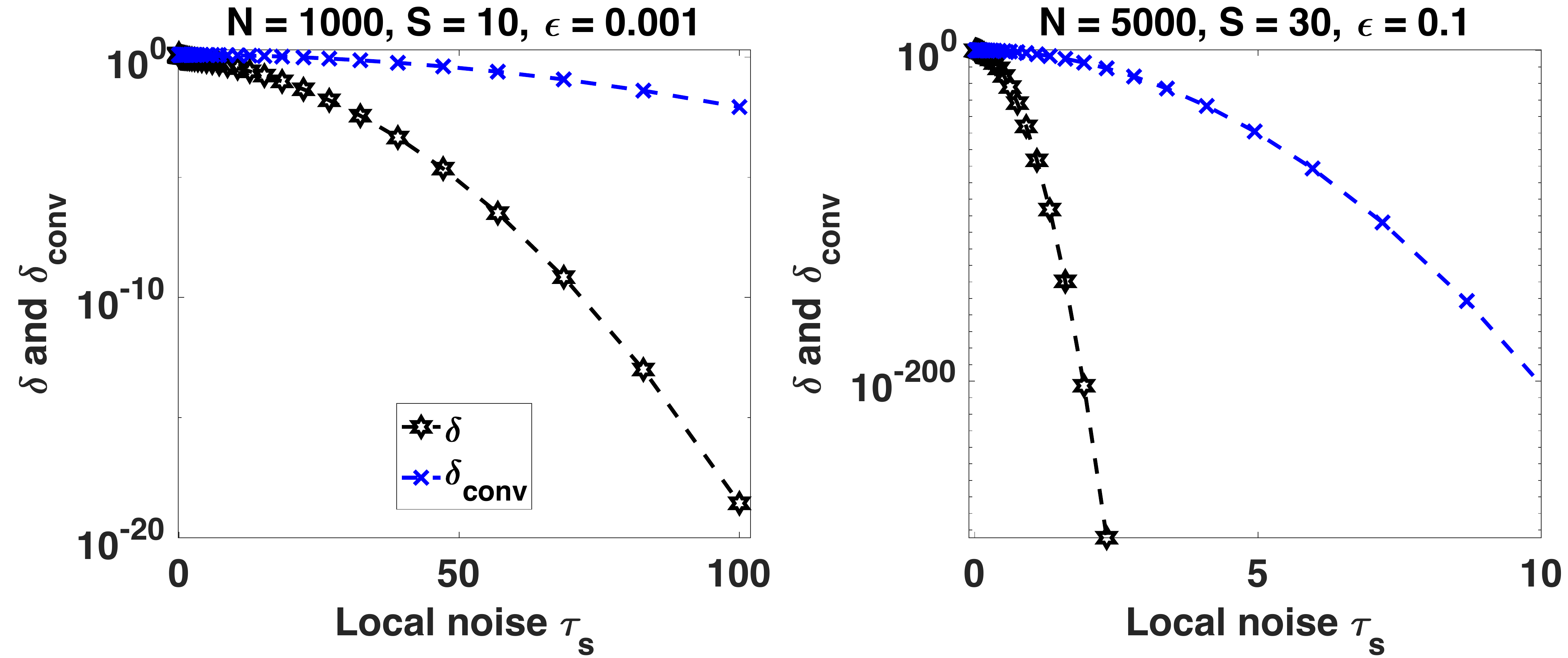}\\
  \vspace{-0.0in}
  \caption{Variation of $\delta$ and $\delta_\mrm{conv}$ with $\tau_s$ for different values of $S$ and $\epsilon$}
  \label{fig:delta_vs_eff_delta}
\end{figure}
Recall that the $\cape$ protocol guarantees $(\epsilon, \delta)$-DP with $\delta = 2\frac{\sigma_z}{\epsilon - \mu_z}\phi\left(\frac{\epsilon - \mu_z}{\sigma_z}\right)$. We claim that this $\delta$ guarantee is much better than the $\delta$ guarantee in the conventional distributed DP scheme. As $\delta$ is an implicit function of $S,\ S_C$ and $\tau_s^2$, we experimentally compare $\delta$ with $\delta_\mrm{conv}$, where $\delta_\mrm{conv}$ is the smallest $\delta$ guarantee we can afford in the conventional distributed DP scheme to achieve the same noise variance as the pooled-data scenario for a given $\epsilon$. We plot $\delta$ and $\delta_\mrm{conv}$ against different $\tau_s$ values for $S_C = \ceil*{\frac{S}{3}} - 1$ and different combinations of $\epsilon$ and $S$ in Figure \ref{fig:delta_vs_eff_delta}. We observe from the figure that $\delta$ is always smaller than $\delta_\mrm{conv}$. That is, for achieving the same noise level at the aggregator output (and therefore the same utility) as the pooled data scenario, we are ensuring a much better privacy guarantee by employing the $\cape$ scheme over the conventional approach. 

\section{Effect of Fewer Colluding Sites}\label{appendix:fewer_colluding_sites}
As mentioned in Section~\ref{sec:cape_utility}, we are interested in how the $\delta$ and $\delta_\mrm{conv}$ vary with weaker collusion assumption (i.e., fewer colluding sites). To that end, we vary the fraction $\frac{S_C}{S}$ and plot the resulting $\delta$ and $\delta_\mrm{conv}$ for different combinations of $\epsilon$, $S$ and $\tau_s$ in Figure \ref{fig:Sc_vs_eff_delta}. Again, we observe that $\delta$ is always smaller than $\delta_\mrm{conv}$. That is, we are ensuring a much better privacy guarantee by employing the $\cape$ scheme over the conventional approach for achieving the same noise level at the aggregator output (and therefore the same utility) as the pooled data scenario. 
\begin{figure}[t]
  \centering
  \includegraphics[width=1\columnwidth]{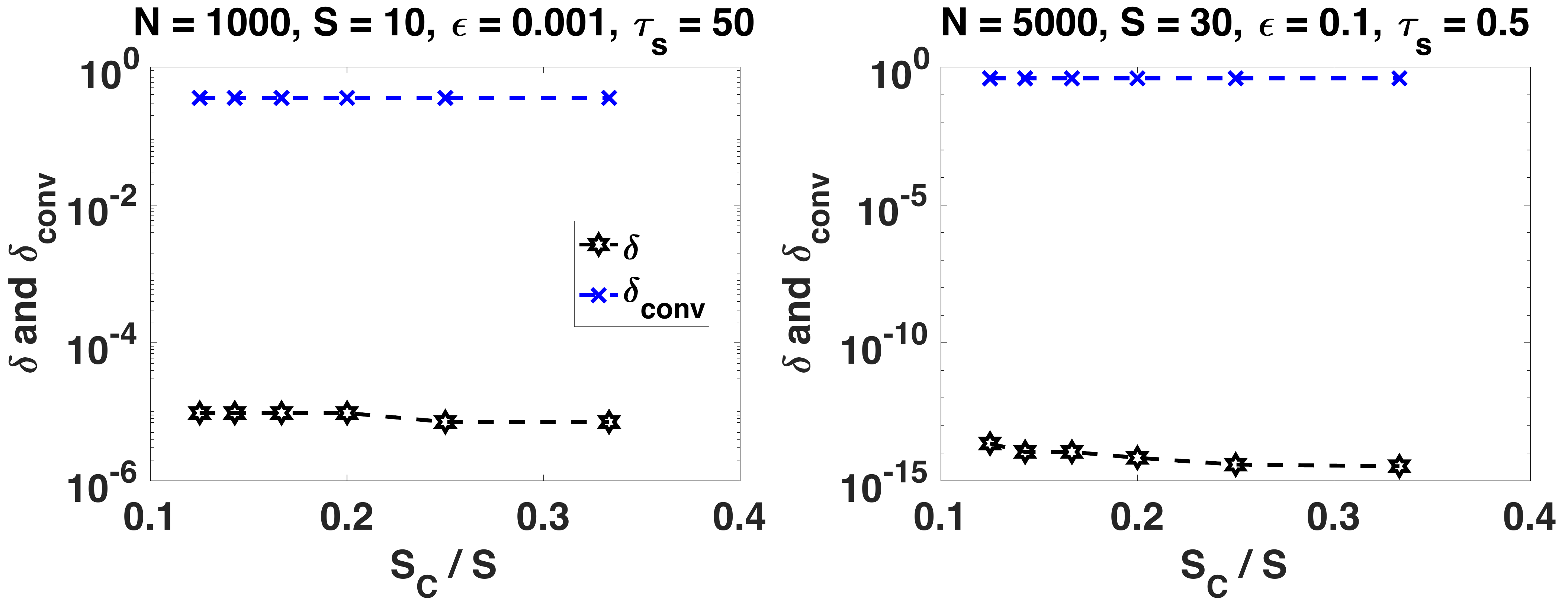}\\
  \vspace{-0.0in}
  \caption{Variation of $\delta$ and $\delta_\mrm{conv}$ with $\frac{S_C}{S}$, for different values of $S$ and $\epsilon$}
  \label{fig:Sc_vs_eff_delta}
\end{figure}

\section{Proof of Lemma~\ref{lemma:cape}}\label{appendix:cape_lemma}
\begin{proof}
We prove the lemma according to~\cite{imtiaz2018}. Recall that in the pooled data scenario, the sensitivity of the function $f(\vect{x})$ is $\frac{1}{N}$, where $\vect{x} = \left[\vect{x}_1,\ldots, \vect{x}_S\right]$. Therefore, to approximate $f(\vect{x})$ satisfying $(\epsilon, \delta)$ differential privacy, we need to have additive Gaussian noise standard deviation at least $\tau_\mrm{pool} = \frac{1}{N\epsilon}\sqrt{2\log\frac{1.25}{\delta}}$. Next, consider the distributed data setting (as in Section~\ref{sec:problem_formulation}) with local noise standard deviation given by
\begin{align*}
\tau_s &= \frac{1}{N_s\epsilon}\sqrt{2\log\frac{1.25}{\delta}} = \frac{S}{N\epsilon}\sqrt{2\log\frac{1.25}{\delta}} = \tau
\end{align*}
We observe $\tau_\mrm{pool} = \frac{\tau_s}{S} \implies \tau_\mrm{pool}^2 = \frac{\tau^2}{S^2}$. We will now show that the $\cape$ algorithm will yield the same noise variance of the estimator at the aggregator. Recall that at the aggregator we compute $a_\mrm{cape} = \frac{1}{S} \sum_{s=1}^S \hat{a}_s = \frac{1}{N} \sum_{n=1}^{N} x_n + \frac{1}{S} \sum_{s=1}^S g_s$. The variance of the estimator $a_\mrm{cape}$ is:
\begin{align*}
\tau_\mrm{cape}^2 \triangleq S\cdot \frac{\tau_g^2}{S^2} = \frac{\tau_g^2}{S} = \frac{\tau^2}{S^2},
\end{align*}
which is exactly the same as the pooled data scenario. Therefore, the $\cape$ algorithm allows us to achieve the same additive noise variance as the pooled data scenario, while satisfying $(\epsilon, \delta)$ differential privacy at the sites and for the final output from the aggregator, where $(\epsilon, \delta)$ satisfy the relation $\delta = 2\frac{\sigma_z}{\epsilon - \mu_z}\phi\left(\frac{\epsilon - \mu_z}{\sigma_z}\right)$.
\end{proof}

\section{Performance Gain of $\cape$}\label{appendix:perf_gain}
\begin{proof}[Proof of Proposition~\ref{prop:gain}]
The local noise variances are $\{\tau_s^2\}$ for $s \in [S]$. In the conventional distributed DP scheme, we compute the following at the aggregator:
\begin{align*}
a_\mrm{conv} &= \frac{1}{S}\sum_{s=1}^S a_s + \frac{1}{S}\sum_{s=1}^S e_s.
\end{align*}
The variance of the estimator is:
\begin{align*}
\tau_\mrm{conv}^2 &= \sum_{s=1}^s \frac{\tau_s^2}{S^2} = \frac{1}{S^2} \sum_{s=1}^s \tau_s^2.
\end{align*}
In the $\cape$ scheme, we compute the following quantity at the aggregator:
\begin{align*}
a_\mrm{cape} &= \frac{1}{S}\sum_{s=1}^S a_s + \frac{1}{S}\sum_{s=1}^S e_s + \frac{1}{S}\sum_{s=1}^S g_s.
\end{align*}
The variance of the estimator is:
\begin{align*}
\tau_\mrm{cape}^2 &= \sum_{s=1}^s \frac{\tau_g^2}{S^2} = \frac{1}{S^3} \sum_{s=1}^s \tau_s^2.
\end{align*}
Therefore, the gain of the $\cape$ scheme over conventional distributed DP approach is
\begin{align*}
G &= \frac{\tau_\mrm{conv}^2}{\tau_\mrm{cape}^2} = S,
\end{align*}
which completes the proof.
\end{proof}

\section{Proof of Proposition~\ref{prop:low_sensitivity}}\label{appendix:low_sensitivity}
We start with reviewing some definitions and lemmas~\cite[Proposition C.2]{majorization} necessary for the proof. 

\begin{Def}[Majorization]
Consider two vectors $\vect{a} \in \mathbb{R}^S$ and $\vect{b} \in \mathbb{R}^S$ with non-increasing entries (i.e., $a_i \geq a_j$ and $b_i \geq b_j$ for $i < j$). Then $\vect{a}$ is majorized by $\vect{b}$, denoted $\vect{a} \prec \vect{b}$, if and only if the following holds:
\begin{align*}
\sum_{s=1}^S a_s &= \sum_{s=1}^S b_s \mbox{ and } \sum_{s=1}^J a_s \leq \sum_{s=1}^J b_s\ \forall J \in [S].
\end{align*}
\end{Def}

\noindent Consider $\vect{n}_\mrm{sym} \triangleq \frac{N}{S}[1,\ldots,1] \in \mathbb{R}^S$ for some positive $N$. Then any vector $\vect{n} = [N_1,\ldots,N_S] \in \mathbb{R}^S$ with non-increasing entries and $\sum_{s=1}^S |N_s| = N$ majorizes $\vect{n}_\mrm{sym}$, or $\vect{n}_\mrm{sym} \prec \vect{n}$.

\begin{Def}[Schur-convex functions]\label{def: SchurCVX}
The function $K: \mathbb{R}^{S} \mapsto \mathbb{R}$ is Schur-convex if for all $\vect{a} \prec \vect{b} \in \mathbb{R}^{S}$ we have $K(\vect{a}) \leq K(\vect{b})$.
\end{Def}

\begin{lemma}\label{lem: schurconx}
If $K$ is symmetric and convex, then $K$ is Schur-convex. The converse does not hold.
\end{lemma}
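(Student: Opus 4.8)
The plan is to prove the two assertions separately: first the implication ``symmetric and convex $\Rightarrow$ Schur-convex,'' and then the failure of the converse by an explicit counterexample. For the forward direction I would lean on two standard structural facts from majorization theory (both available in \cite{majorization}): the Hardy--Littlewood--P\'olya characterization of majorization and the Birkhoff--von Neumann theorem on doubly stochastic matrices.

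\textbf{Forward direction.} Suppose $K$ is symmetric and convex and that $\vect{a} \prec \vect{b}$. The first step is to recall that $\vect{a} \prec \vect{b}$ holds if and only if there is a doubly stochastic matrix $\matr{P}$ with $\vect{a} = \matr{P}\vect{b}$. The second step is the Birkhoff--von Neumann theorem: every doubly stochastic matrix is a convex combination of permutation matrices, so I may write $\matr{P} = \sum_k \theta_k \matr{\Pi}_k$ with $\theta_k \geq 0$, $\sum_k \theta_k = 1$, and each $\matr{\Pi}_k$ a permutation matrix. Hence $\vect{a} = \sum_k \theta_k \matr{\Pi}_k \vect{b}$, and applying convexity of $K$ followed by its symmetry gives
\begin{align*}
K(\vect{a}) = K\!\left(\sum_k \theta_k \matr{\Pi}_k \vect{b}\right) \leq \sum_k \theta_k K(\matr{\Pi}_k \vect{b}) = \sum_k \theta_k K(\vect{b}) = K(\vect{b}),
\end{align*}
where the middle equality uses $K(\matr{\Pi}_k \vect{b}) = K(\vect{b})$ (symmetry) and the final equality uses $\sum_k \theta_k = 1$. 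This yields $K(\vect{a}) \leq K(\vect{b})$, which is exactly Schur-convexity by Definition~\ref{def: SchurCVX}.

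\textbf{Converse.} To show that the converse fails, I would exhibit a function that is Schur-convex but not convex. The natural candidate is any symmetric function that depends on the coordinates only through their sum, since $\vect{a} \prec \vect{b}$ forces $\sum_s a_s = \sum_s b_s$. Concretely, take $K(\vect{x}) = -\left(\sum_{s=1}^S x_s\right)^2$. Whenever $\vect{a} \prec \vect{b}$ the sums coincide, so $K(\vect{a}) = K(\vect{b})$; in particular $K(\vect{a}) \leq K(\vect{b})$, so $K$ is Schur-convex (and in fact Schur-concave as well). Yet $K$ is strictly concave (its Hessian $-2\,\vect{1}\vect{1}^\top$ has a strictly negative eigenvalue), hence not convex. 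This single example establishes that a Schur-convex function need not be convex, so the converse does not hold.

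\textbf{Main obstacle.} The entire forward direction rests on the two cited structural theorems, so the only real effort is assembling the chain of (in)equalities correctly and invoking the decomposition cleanly; there is no delicate estimate to control. If a self-contained argument were desired instead, I would replace these with the Schur--Ostrowski differential criterion or an inductive ``transfer''/Robin-Hood argument that moves mass between two coordinates at a time, but given that the excerpt already cites \cite{majorization}, invoking the doubly stochastic representation is the most economical route.
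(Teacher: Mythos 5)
Your proof is correct, and it coincides with the paper's (implicit) argument: the paper does not prove this lemma at all but simply quotes it from \cite[Proposition C.2]{majorization}, whose standard proof is exactly your route --- the Hardy--Littlewood--P\'olya doubly stochastic characterization of $\vect{a} \prec \vect{b}$ plus the Birkhoff--von Neumann decomposition, with convexity and symmetry of $K$ closing the chain. Your counterexample $K(\vect{x}) = -\left(\sum_{s=1}^S x_s\right)^2$ for the failure of the converse is also valid, since it is symmetric, constant on majorization classes (hence Schur-convex), and strictly concave along the direction $\vect{1}$, so it is not convex.
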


\section{Proof of Theorem~\ref{thm:fm}}\label{appendix:thm:fm}
\begin{proof} The proof of Theorem \ref{thm:fm} follows from the fact that the function $\hat{f}_D(\vect{w})$ depends on the data samples only through $\{\hat{\Lambda}_j\}$.  The computation of $\{\hat{\Lambda}_j\}$ is $(\epsilon, \delta)$-differentially private by the Gaussian mechanism~\cite{dwork2006,dwork2013algorithmic}, so the release of $\hat{f}_D(\vect{w})$ satisfies $(\epsilon, \delta)$-differential privacy. One way to rationalize this is to consider that the probability of the event of selecting a particular set of $\{\hat{\Lambda}_j\}$ is the same as the event of formulating a function $\hat{f}_D(\vect{w})$ with that set of $\{\hat{\Lambda}_j\}$. Therefore, it suffices to consider the joint density of the $\{\hat{\Lambda}_j\}$ and find an upper bound on the ratio of the joint densities of the $\{\hat{\Lambda}_j\}$ under $\mathbb{D}$ and $\mathbb{D}'$. As we employ the Gaussian mechanism to compute $\{\hat{\Lambda}_j\}$, the ratio is upper bounded by $\exp(\epsilon)$ with probability at least $1-\delta$. Therefore, the release of $\hat{f}_D(\vect{w})$ satisfies $(\epsilon, \delta)$-differential privacy. Furthermore, differential privacy is closed under post processing. Therefore, the computation of the minimizer $\hat{\vect{w}}^* = \argmin_\vect{w} \hat{f}_D(\vect{w})$ also satisfies $(\epsilon, \delta)$-differential privacy.
\end{proof}

\section{Improved Functional Mechanism for Logistic Regression}\label{appendix:log_reg}
\begin{figure*}[t]
  \centering
  \includegraphics[width=1\textwidth]{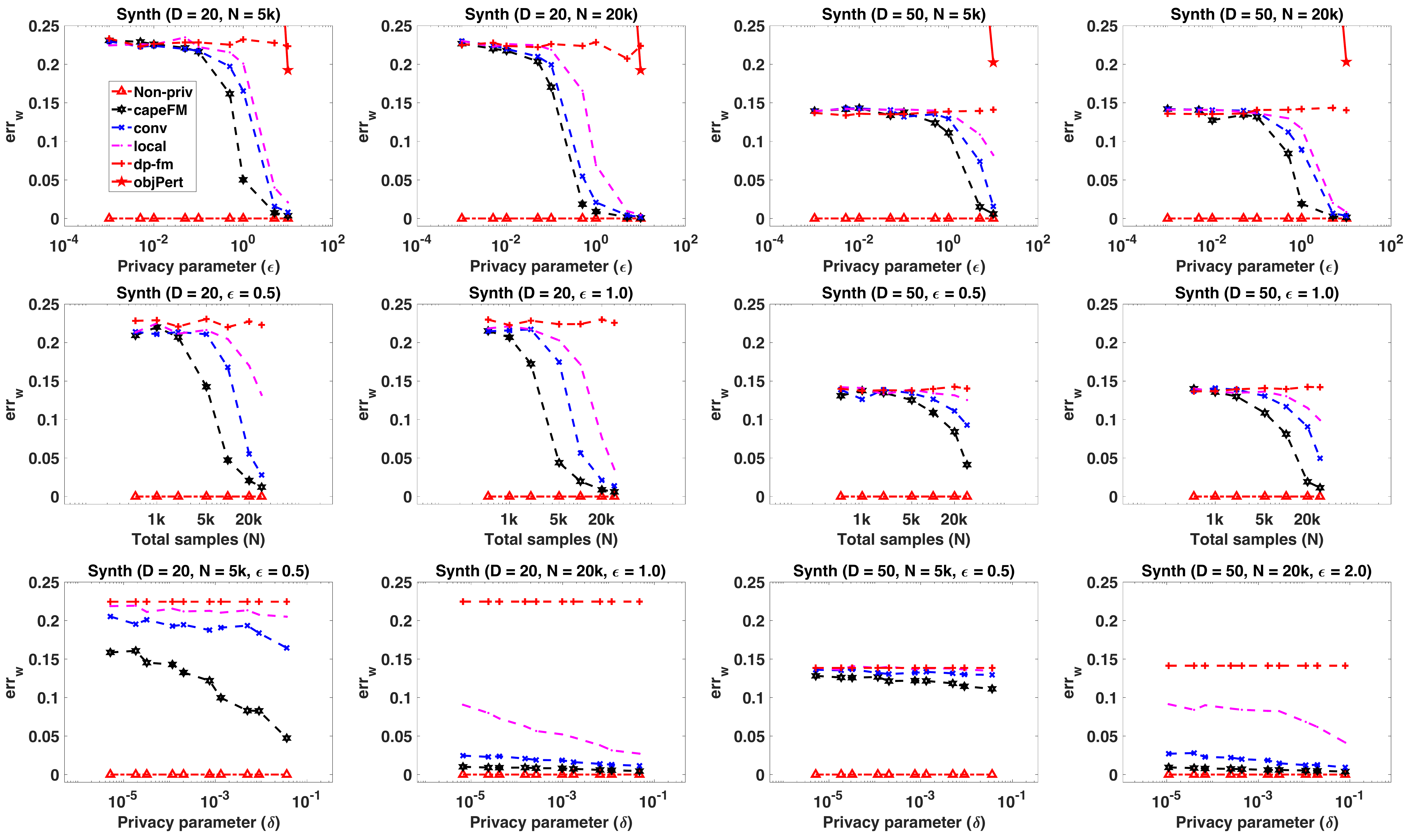}\\
  \vspace{-0.0in}
  \caption{Variation of $\mrm{err}_w$ for synthetic datasets. Top-row: with $\epsilon$. Mid-row: with total samples $N$. Bottom-row: with $\delta$. Fixed parameter: $S = 5$.}
  \label{fig:err_vs_all_synth}
\end{figure*}

In this section, we demonstrate the proposed improved functional mechanism for a logistic regression problem in the centralized setting. As in Section~\ref{sec:fm_example}, we show that Algorithm \ref{alg:fm} achieves much better utility at the price of a weaker privacy guarantee ($(\epsilon, \delta)$-DP vs $\epsilon$-DP). The main reason for this performance improvement is due to defining the sensitivities of $\Lambda_j$ separately for each $j$, instead of using an uniform conservative upper-bound $\Delta^\mrm{dp-fm}$ (as in~\cite{zhang2012}).

We have a dataset $\mathbb{D}$ with $N$ samples. Each sample is a tuple $(\vect{x}_n, y_n)$, where $\vect{x}_n \in \mathbb{R}^D$ is the feature vector and $y_n \in \{0, 1\}$ is the label. Without loss of generality, we assume that $\|\vect{x}_n\|_2 \leq 1$. We want to find a vector $\vect{w} \in \mathbb{R}^D$ such that $\mathcal{I}(\vect{x}_n^\top\vect{w} \geq 0)$ gives the label $y_n$ for all $n\in [N]$, where $\mathcal{I}(\cdot)$ denotes the indicator function. The cost function due to each sample and a particular $\vect{w}$ is $f:\mathbb{R}^D \times \mathbb{R}^D \mapsto \mathbb{R}$ and is defined as the logistic loss:
\begin{align}\label{eqn:logistic_loss_sample}
f(\vect{x}_n, \vect{w}) &= \log\left(1 + \exp\left(\vect{x}_n^\top\vect{w}\right)\right) - y_n \vect{x}_n^\top\vect{w}.
\end{align}
The empirical average cost function is defined as
\begin{align}\label{eqn:logistic_loss_avg}
f_D(\vect{w}) 	&= \frac{1}{N} \sum_{n=1}^N\log\left(1 + \exp\left(\vect{x}_n^\top\vect{w}\right)\right) - y_n \vect{x}_n^\top\vect{w}.
\end{align}
It is not readily apparent how to express this logistic loss function in the form of \eqref{eqn:empirical_cost}. Zhang et al.~\cite{zhang2012} derived an approximate polynomial form of $f_D(\vect{w})$, denoted by $\tilde{f}_D(\vect{w})$, using a Taylor series expansion. Each function $f(\vect{x}_n, \vect{w})$ can be represented as
\begin{align*}
f(\vect{x}_n, \vect{w}) &= \sum_{m=1}^M f_m\left(g_m\left(\vect{x}_n, \vect{w}\right)\right)
\end{align*}
for some functions $f_m(\cdot)$ and $g_m(\cdot)$, where each $g_m(\cdot)$ is a monomial of $\{w_d\}$. Using the Taylor expansion of $f_m(\cdot)$ around any real $z_m$, we have
\begin{align*}
f(\vect{x}_n, \vect{w}) &= \sum_{m=1}^M \sum_{j=0}^\infty \frac{f_m^{(j)}(z_m)}{j!} \left(g_m\left(\vect{x}_n, \vect{w}\right) - z_m\right)^j,
\end{align*}
where $f_m^{(j)}(\cdot)$ is the $j$-th derivative of $f_m(\cdot)$. Now, for the logistic loss given in \eqref{eqn:logistic_loss_sample}, we have~\cite{zhang2012} that
\begin{align*}
g_1(\vect{x}_n, \vect{w}) &= \vect{x}_n^\top \vect{w}\\
f_1(z) &= \log(1+\exp(z))\\
g_2(\vect{x}_n, \vect{w}) &= y_n\vect{x}_n^\top \vect{w}\\
f_2(z) &= -z.
\end{align*}
Therefore, the empirical average cost can be written as
\begin{align*}
f_D(\vect{w}) 	&= \frac{1}{N} \sum_{n=1}^N \sum_{m=1}^2 \sum_{j=0}^\infty \frac{f_m^{(j)}(z_m)}{j!} \left(g_m\left(\vect{x}_n, \vect{w}\right) - z_m\right)^j.
\end{align*}
Zhang et al.~\cite{zhang2012} approximated this infinite sum for $j = 0, 1, 2$ and showed analysis for the excess error of the approximation. For $J=2$, the approximation error is a small constant. Now, for $m = 1,2$ and $j = 0, 1, 2$, the approximate empirical average cost function can be written as
\begin{align*}
\tilde{f}_D(\vect{w}) &= \frac{1}{N} \sum_{n=1}^N \sum_{m=1}^2 \sum_{j=0}^2 \frac{f_m^{(j)}(z_m)}{j!} \left(g_m\left(\vect{x}_n, \vect{w}\right) - z_m\right)^j.
\end{align*}
Using the expressions of $f_m^{(j)}$ and some simple algebra, it can be shown that
\begin{align*}
\tilde{f}_D(\vect{w}) &=\log2 + \sum_{d=1}^D\left(\frac{1}{N}\sum_{n=1}^N\left(\frac{1}{2} - y_n\right)x_{nd}\right)w_d \\
& + \sum_{d_1=1}^D\sum_{d_2=1}^D\left(\frac{1}{8N}\sum_{n=1}^N x_{nd_1}x_{nd_2}\right)w_{d_1}w_{d_2},
\end{align*}
where $x_{pq}$ is the $q$-th element of the $p$-th sample vector. Using our definitions of $\Lambda_j$ as before, we have
\begin{align*}
\Lambda_0 &= \log2\\
\Lambda_1 &= \frac{1}{N}	\begin{bmatrix}
    							\sum_{n=1}^N \left(\frac{1}{2} - y_n\right) x_{n1} \\
                				\vdots \\
    							\sum_{n=1}^N \left(\frac{1}{2} - y_n\right) x_{nD}
  							\end{bmatrix} = \frac{\left(\vect{1}^\top \mrm{diag}\left(\frac{1}{2} - \vect{y}\right) \matr{X}\right)^\top}{N} \\
\Lambda_2 &= \frac{1}{8N} 	\begin{bmatrix}
											\sum_{n=1}^N x_{n1}^2 & \cdots & \sum_{n=1}^N x_{n1}x_{nD}\\
                                				\vdots					& \ddots & \vdots \\
                                				\sum_{n=1}^N x_{nD}x_{n1} & \cdots & \sum_{n=1}^N x_{nD}^2
											\end{bmatrix} = \frac{\matr{X}^\top\matr{X}}{8N},
\end{align*}
where $\vect{1}$ is an $N$-dimensional vector of all 1's and $\mrm{diag}\left(\frac{1}{2} - \vect{y}\right)$ is an $N\times N$ diagonal matrix with the elements of $\frac{1}{2} - \vect{y}$ as the diagonal entries. If we express the $\{\bar{\phi}_j\}$ as in the linear regression example (Section~\ref{sec:fm_example}), we can write $\tilde{f}_D(\vect{w})$ as
\begin{align}
\tilde{f}_D(\vect{w}) 	&= \sum_{j=0}^2 \inprod{\Lambda_j}{\bar{\phi}_j}.
\end{align}
Now, we focus on finding the sensitivities of $\{\Lambda_j\}$. Let us consider a neighboring dataset $\mathbb{D}'$ which contains the same tuples as $\mathbb{D}$, except for the last one, i.e. $(\vect{x}_N', y_N')$. We have
\begin{align*}
\left|\Lambda_0^\mathbb{D} - \Lambda_0^{\mathbb{D}'}\right| &= 0 \triangleq \Delta_0.
\end{align*}
Next, we observe
\begin{align*}
\left\|\Lambda_1^\mathbb{D} - \Lambda_1^{\mathbb{D}'}\right\|_2 &= \frac{1}{N} \left\|\left(\frac{1}{2} - y_N\right)\vect{x}_N - \left(\frac{1}{2} - y_N'\right)\vect{x}_N'\right\|_2 \\
																&\leq \frac{2}{N} \left\|\left(\frac{1}{2} - y_N\right)\vect{x}_N\right\|_2 \\
                         	                                   &\leq \frac{2}{N} \frac{1}{2} \|\vect{x}_N\|_2 \leq \frac{1}{N} \triangleq \Delta_1,
\end{align*}
where we used the inequality $\left\|\left(\frac{1}{2} - y_N\right)\vect{x}_N\right\|_2 \leq \frac{1}{2} \|\vect{x}_N\|_2 \leq \frac{1}{2}$ because, $y_n \in \{0, 1\}$ and $\|\vect{x}_n\|_2 \leq 1$. Finally, we observe
\begin{align*}
\left\|\Lambda_2^\mathbb{D} - \Lambda_2^{\mathbb{D}'}\right\|_F &= \frac{1}{8N}\left\|\matr{X}^\top\matr{X} - {\matr{X}'}^\top\matr{X}'\right\|_F \\
							&= \frac{1}{8N}\left\|\vect{x}_N\vect{x}_N^\top - \vect{x}_N'{\vect{x}_N'}^\top\right\|_F \leq \frac{\sqrt{2}}{8N} \triangleq \Delta_2,
\end{align*}
where the last inequality follows from realizing that the $D\times D$ symmetric matrix $\vect{x}_N\vect{x}_N^\top - \vect{x}_N'{\vect{x}_N'}^\top$ is at-most rank-2 and $\left\|\vect{x}_N\vect{x}_N^\top - \vect{x}_N'{\vect{x}_N'}^\top\right\|_F \leq \sqrt{2}$, as shown before (Section~\ref{sec:fm_example}). Now that we have computed the $\mathcal{L}_2$-sensitivities of $\{\Lambda_j\}$, we can compute $\{\hat{\Lambda}_j\}$ and thus, the $(\epsilon, \delta)$-differentially private approximation of $\tilde{f}_D(\vect{w})$ following Algorithm \ref{alg:fm} ($\tilde{f}_D(\vect{w})$ is the input to Algorithm \ref{alg:fm} for logistic regression). Note that as with linear regression, the sensitivity computation technique for $\epsilon$-differential privacy~\cite{zhang2012} for each entry of $\Lambda_j$ would be $\Delta^\mrm{dp-fm} = \frac{1}{N}\left(\frac{D^2}{4} + 3D\right)$, which is orders of magnitude larger than $\Delta_j$ for any $D > 1$ and for all $j \in \{0,\ldots,J\}$. Therefore, with the sensitivity computation proposed in this paper, we can achieve $\hat{f}_D(\vect{w})$ from $\tilde{f}_D(\vect{w})$ with much less noise. This would certainly result in a more accurate privacy-preserving estimate of the optimal $\hat{\vect{w}}^* = \argmin_\vect{w}\hat{f}_D(\vect{w})$. However, one cost of the performance improvement, for both linear and logistic regression applications, is the weakening of the privacy guarantee from $\epsilon$-differential privacy to $(\epsilon, \delta)$-differential privacy.

\section{Distributed Linear Regression -- Variation of $\mrm{err}_w$}\label{appendix:err_vs_all_synth}
\noindent\textbf{Dependence on Privacy Parameter $\epsilon$. }In the top-row of Figure \ref{fig:err_vs_all_synth}, we show the variation of $\mrm{err}_w$ of different algorithms for different values of $\epsilon$ on synthetic data. For this experiment, we kept the number of total samples $N$ and the number of sites $S$ fixed. We show the plots for two different feature dimensions: $D=20$ and $D=50$, each with two different sample sizes. For both of the synthetic datasets, we observe that as $\epsilon$ increases (higher privacy risk), the $\mrm{err}_w$ decreases. The proposed $\capefm$ reaches very small $\mrm{err}_w$ for some parameter choices and clearly outperforms the $\dpfm$, $\objpert$, $\conv$ and $\local$. One of the reasons that $\capefm$ outperforms $\conv$ is the smaller noise variance at the aggregator that we can achieve due to the correlated noise scheme. Moreover, $\capefm$ outperforms $\dpfm$ because $\dpfm$ suffers from a much larger variance at the aggregator (due to the conservative sensitivity computation of $\lambda_{n\phi}$). On the other hand, $\objpert$ also entails addition of noise with large variance as the sensitivity of the optimal $\vect{w}^*$ is large (to be exact, the sensitivity is 2). Achieving better performance than $\local$ is intuitive because including the information from multiple sites to estimate a population parameter always results in better performance than using the data from a single site only. Additionally, we observe that for datasets with lower dimensional samples, we can use smaller $\epsilon$ (i.e., to guarantee lower privacy risk) for the same utility.

\noindent\textbf{Dependence on Total Sample Size $N$. }Next, we investigate the variation in performance with the total sample size $N$. The middle-row of Figure \ref{fig:err_vs_all_synth} shows how $\mrm{err}_w$ decreases as a function of total sample size $N$ on synthetic data. The variation with $N$ reinforces the results seen earlier with variation of $\epsilon$. For a fixed $\epsilon$ and $S$, $\mrm{err}_w$ decreases as we increase $N$. For sufficiently large $N$ and $\epsilon$, $\mrm{err}_w$ will reach that of the non-private pooled case $(\nonpriv)$. Again, we observe a sharper decrease in $\mrm{err}_w$ for lower-dimensional datasets. Note that, for the synthetic datasets, the error $\mrm{err}_w$ for $\objpert$ is too large to show on the same scale as other algorithms. That is why the $\mrm{err}_w$ curves for $\objpert$ do not appear in Figure \ref{fig:err_vs_all_synth} (middle-row).

\noindent\textbf{Dependence on Privacy Parameter $\delta$. }In the bottom-row of Figure \ref{fig:err_vs_all_synth}, we show how $\mrm{err}_w$ varies with varying $\delta$ on synthetic data. We observe that if $N$ and $\delta$ are too small, the proposed $\capefm$, $\conv$ and $\local$ algorithms perform poorly. However, the proposed $\capefm$ algorithm can achieve very good utility for moderate $N$ and $\delta$ values, easily outperforming the other differentially-private algorithms. Recall that the $\dpfm$ and the $\objpert$ algorithms offer pure $\epsilon$-differential privacy and, therefore, do not vary with $\delta$. Again we observe that, for the synthetic datasets, the error $\mrm{err}_w$ for $\objpert$ is too large and do not appear in Figure \ref{fig:err_vs_all_synth} (bottom row).

\end{document}